\documentclass[journal]{IEEEtran}
%

\usepackage{xcolor,colortbl,bbm}
\usepackage{algorithmic,algorithm}
\usepackage{graphicx, subcaption}
\usepackage{balance}
\usepackage{amssymb}


%

%
\usepackage{cite}

%
\ifCLASSINFOpdf
\else
\fi
%
%

%
\usepackage{amssymb, amsmath, amsthm,bm,dsfont, float}
\hyphenation{op-tical net-works semi-conduc-tor}


\DeclareFontFamily{OT1}{pzc}{}
\DeclareFontShape{OT1}{pzc}{m}{it}{<-> s * [1.10] pzcmi7t}{}
\DeclareMathAlphabet{\mathpzc}{OT1}{pzc}{m}{it}

\newcommand{\col}[1]{\mbox{\rm col}\left\{#1\right\}}
\newcommand{\grad}[1]{\nabla_{#1^{\tran}} }
\newcommand{\ws}{\bm{{\scriptstyle\mathcal{W}}}}
\newcommand{\w}{\bm{w}}

\newcommand{\we}{\widetilde{\w}}
\newcommand{\eqdef}{\:\overset{\Delta}{=}\:}
\DeclareMathOperator*{\argmin}{argmin}
\newcommand{\Li}[1]{\mathcal{L}_{#1,i}}

\newcommand{\tran}{{\sf T}}

\newtheorem{theorem}{Theorem}
\newtheorem{assumption}{Assumption}
\newtheorem{lemma}{Lemma}
\newtheorem{corollary}{Corollary}

\newtheorem{definition}{Definition}

\definecolor{Gray}{gray}{0.8}
\definecolor{LightCyan}{rgb}{0.88,1,1}

\makeatletter
\def\endthebibliography{%
	\def\@noitemerr{\@latex@warning{Empty `thebibliography' environment}}%
	\endlist
}
\makeatother


\begin{document}
%
\title{Privatized Graph Federated Learning}
%
%
%

\author{Elsa~Rizk\IEEEauthorrefmark{1},~\IEEEmembership{Member,~IEEE,}
        Stefan~Vlaski\IEEEauthorrefmark{2},~\IEEEmembership{Member,~IEEE,}
        and~Ali~H. Sayed\IEEEauthorrefmark{1},~\IEEEmembership{Fellow,~IEEE.}
\thanks{\IEEEauthorrefmark{1}The authors are with the School of Engineering, École Polytechnique Fédérale de Lausanne (e-mail: \{elsa.rizk; ali.sayed\}@epfl.ch).\IEEEauthorrefmark{2}The author is with the Department of Electrical and Electronic Engineering, Imperial College London (e-mail: s.vlaski@imperial.ac.uk).
A short conference article dealing with an earlier version of this work without extended arguments and proofs appears in \cite{GFL}. 
\\ This work has been submitted for review.}
}

\maketitle

\vspace*{-0.5cm}
\begin{abstract}
Federated learning is a semi-distributed algorithm, where a server communicates with multiple dispersed clients to learn a global model. The federated architecture is not robust and is sensitive to communication and computational overloads due to its one-master multi-client structure. It can also be subject to privacy attacks targeting personal information on the communication links. In this work, we introduce graph federated learning (GFL), which consists of multiple federated units connected by a graph. We then show how graph homomorphic perturbations can be used to ensure the algorithm is differentially private. We conduct both convergence and privacy theoretical analyses and illustrate performance by means of computer simulations. 
\end{abstract}

\begin{IEEEkeywords}
federated learning, graph learning, privatized learning, differntial privacy 
\end{IEEEkeywords}

%
\IEEEpeerreviewmaketitle

\vspace{-0.5cm}
\section{Introduction}

\IEEEPARstart{F}{e}derated learning (FL) \cite{mcmahan16} emerged in recent years as a solution to distributed machine learning where  users no longer need to send their data to a server for training. Instead, data remains local, and training happens in collaboration between different clients and the server. Compared to a fully decentralized solution, communication occurs between the server and the clients (or agents), instead of directly between the agents themselves. Such a solution is advantageous in the sense that users no longer need to worry about sharing their data with an unknown party, and the high cost of sending all their raw data is eliminated. In this way, the data stays locally safe on a user's device, and no extra communication cost is incurred for transferring the data remotely. However, such a distributed architecture is not robust to communication failures and computational overloads, nor it is immune to privacy attacks when agents are required to share their local updates. In standard FL, millions of users can be connected to \textit{one} server at a time. This means one server will need to be responsible for the communication with all clients with significant computational burden, thus rendering the system susceptible to communication failures. Furthermore, whether clients send their gradient updates or their local models, information about their data can be inferred from the exchanges and leaked \cite{Hitaj2017,Melis2019,nasr2019comprehensive,Zhu2019}. Consider for instance the logistic risk; the gradient of the loss function is a constant multiple of the feature vector. Thus, even though the actual data samples are not sent to the server, information about them can still be inferred from the gradient updates or the models.  

These considerations motivate us to propose an architecture for federated learning with privacy guarantees. In particular, we introduce the graph federated architecture, which consists of multiple servers, and we privatize the algorithm by ensuring the communication ocuring between the servers and the clients is secure. Graph homomorphic perturbations, which were initially introduced in \cite{vlaski2020graphhomomorphic}, focus on the communication between servers. They are based on adding correlated noise to the messages sent between servers such that the noise cancels out if we were to take the average of all messages across all servers. As for the privatization between the clients and their servers, we rely on already existing solutions, mainly \cite{bonawitz2016practical}. It is a secret sharing method that guarantees messages sent by clients are encoded and is equivalent to masking the message. While the two protocols rely on different tools, both make sure the effect of the added noise is reduced.

Other works have also contributed to addressing the same challenges we are considering in this work, albeit differently. For example, the work \cite{HFL} introduces a hierarchical architecture, where it is assumed there are multiple servers connected in a tree structure. Such a solution still has one main server and thus faces the same robustness problem as FL. The graph federated learning architecture in this work (and which appeared in the earlier conference publication \cite{GFL}) is a more general structure. The work \cite{liu2022decentralized} generalizes the standard distributed learning framework to include local updates, while \cite{wang2022confederated} has a similar architecture to the GFL architecture proposed earlier in \cite{GFL}, it nevertheless does not deal with privacy and employs different objective functions and a different learning algorithm based on the alternating direction method of multipliers. Likewise, a plethora of solutions exist that relate to privacy issues. These methods may be split into two sub-groups: those using random perturbations to ensure a certain level of differential privacy \cite{geyer2017differentially,hu2020personalized,triastcyn2019federated,truex2020ldp,wei2020federated,JayDLDP,LiDLDP,ZhuDPDL,pathak2010DP}, or those that rely on cryptographic methods \cite{bonawitz2016practical,gascon2017privacy,Mohassel2017SecureML,Niko2013,Zheng2019}. 
Both have their advantages and disadvantages. While differential privacy is easy to implement, it hinders the performance of the algorithm by reducing the model utility. As for cryptographic methods, they are generally harder to implement since they require more computational and communication power \cite{ishai2008cryptography,damgaard2010perfectly}. Furthremore, they restrict the number of participating users. Thus, moving forward, we believe a scheme fusing the two paradigms would be best. 

\vspace*{-0.3cm}
\section{Graph Federated Architecture}
In the graph federated architecture, which we initially introduced in \cite{GFL}, we consider $P$ federated units connected by a graph structure. Each federated unit consists of a server and a set of $K$ agents. Thus, the overall architecture can be represented as a graph depicted in Figure \ref{fig:GFA}. We denote the combination matrix connecting the servers by $A \in \mathbb{R}^{P\times P }$, and we write $a_{mp}$ to refer to the elements of $A$. We assume each agent of every server has its own dataset $\{x_{p,k,n}\}_{n=1}^{N_{p,k}}$ that is non-iid when compared to the other agents. The subscript $p$ refers to the federated unit, $k$ to the agent, and $n$ to the data sample. 

\begin{figure}
	\centering
		\includegraphics[width = 0.47\textwidth]{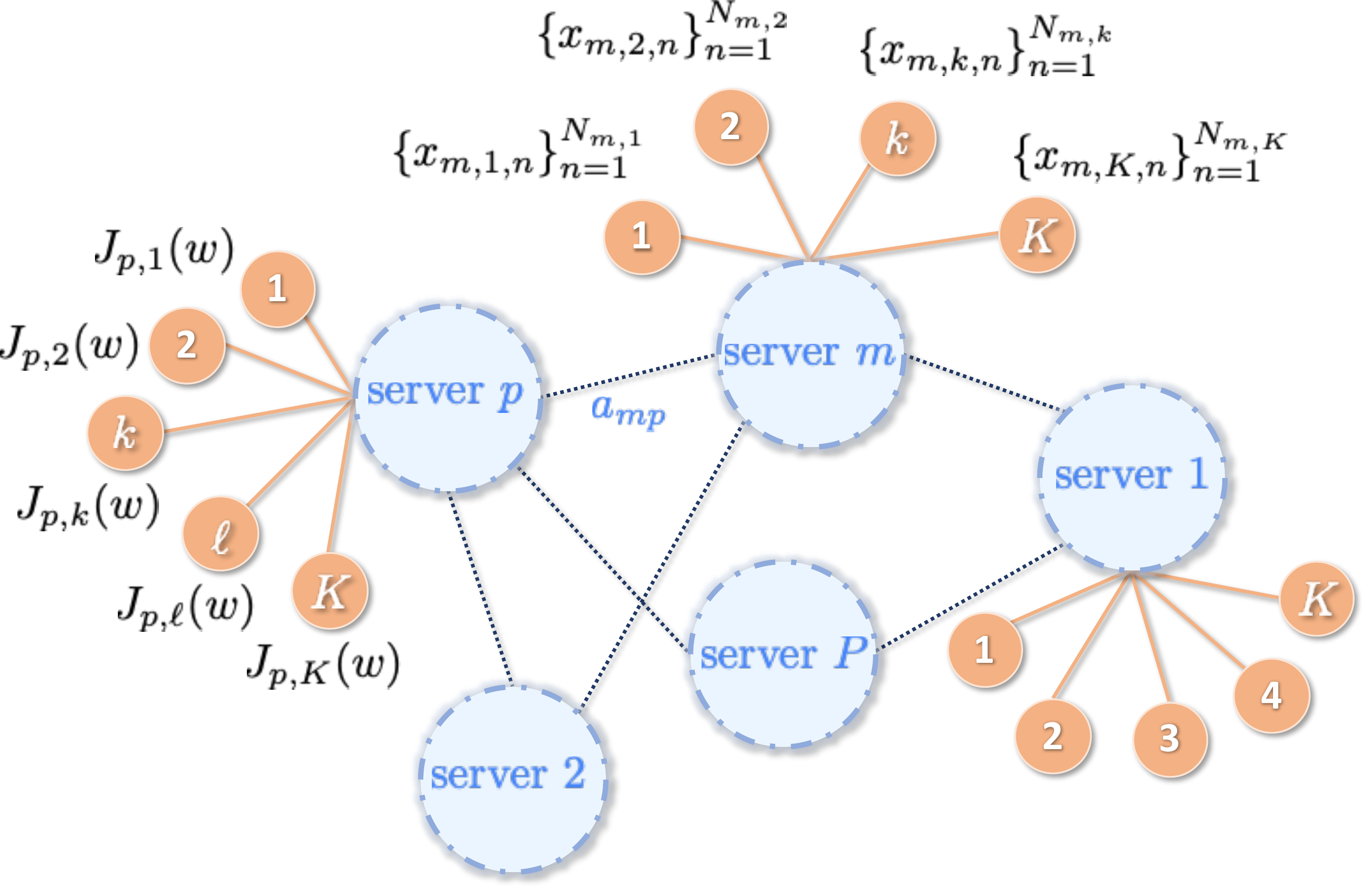}
		\caption{The graph federated learning architecture.}\label{fig:GFA}
\end{figure}

With this architecture, we associate a convex optimization problem that will take into account the cost function at each federated unit. Thus, the optimization goal is to find the optimal global model $w^o$ that minimizes an average empirical risk:
\begin{equation}\label{eq:optProb}
	w^o \eqdef \argmin_{w\in \mathbb{R}^M} \frac{1}{P}\sum_{p=1}^P \frac{1}{K}\sum_{k=1}^K J_{p,k}(w),
\end{equation}
where each individual cost is an empirical risk defined over the local loss functions $Q_{p,k}(\cdot;\cdot)$:
\begin{equation}\label{eq:locER}
	J_{p,k} (w) \eqdef \frac{1}{N_{p,k}} \sum_{n=1}^{N_{p,k}} Q_{p,k}(w;x_{p,k,n}).
\end{equation}
To solve problem \eqref{eq:optProb} each federated unit $p$ runs the standard federated averaging (FedAvg) algorithm \cite{mcmahan16}. An iteration $i$ of the algorithm consists of the server $p$ selecting a subset of $L$ participating agents $\Li{p}$. Then, in parallel, each agent runs a series of stochastic gardient descent (SGD) steps. We call these local steps epochs, and denote an epoch by the letter $e$ and the total number of epochs by $E_{p,k}$. The sampled data point at an agent $k$ in the federated unit $p$ during the $e^{th}$ epoch of iteration $i$ is denoted by $b$. Thus, during an iteration $i$, each participating agent $k \in \Li{p}$ updates the last model $\w_{p,i-1}$ and sends its new model $\w_{p,k,E_{p,k}}$ to the server after $E_{p,k}$ epochs. During a single epoch $e$, the agent updates its current local model $w_{p,k,e-1}$ by running a single SGD step. Thus, an agent repeats the following adaptation step for $e=1,2,\cdots, E_{p,k}$: 
\begin{align}\label{eq:UpdStep}
	\w_{p,k,e} = &\:\w_{p,k,e-1} - \frac{ \mu }{E_{p,k}}  \grad{w}Q_{p,k}(\w_{p,k,e-1};\bm{x}_{p,k,b}),
\end{align}
with $\bm{x}_{p,k,b} $ be the sampled data of agent $k$ in federated unit $p$, and $\w_{p,k,0} = \w_{p,i-1}$. After all the participating agents $k \in \Li{p}$ run all their epochs, the server aggregates their final models $\w_{p,k,E_{p,k}}$, which we rename as $\w_{p,k,i}$ since it is the final local model at iteration $i$:
\begin{align}\label{eq:aggrStep}
	\bm{\psi}_{p,i} = \frac{1}{L}\sum_{k\in \Li{p}} \w_{p,k,i}.
\end{align}
Next, at the server level, these estimates are combined across neighbourhoods using a diffusion type strategy, where we first consider the previous steps \eqref{eq:UpdStep} and \eqref{eq:aggrStep} as the adaptation step and the following step as the combination step:
\begin{equation}\label{eq:combStep}
	\w_{p,i} = \sum_{m\in \mathcal{N}_p}a_{pm}\bm{\psi}_{m,i}.
\end{equation}

To introduce privacy, the models communicated at each round between the agents and the servers need to be encrypted in some way. We could either apply secure multiparty computation (SMC) tools, like secret sharing, or use differential privacy. We focus on differential privacy or masking tools that can be represented by added noise. Thus, we let agent 1 in federated unit 2 add a noise component $\bm{g}_{2,1,i}$ to its final model $\w_{2,1,i}$ at iteration $i$, and then let serever $2$ add $\bm{g}_{12,i}$ to the message $\bm{\psi}_{2,i}$ it sends to server 1. More generally, we denote by $\bm{g}_{pm,i}$ the noise added to the message sent by server $m$ to server $p$ at iteration $i$. Similarly, we denote by $\bm{g}_{p,k,i}$ the noise added to the model sent by agent $k$ to server $p$ during the $i^{th}$ iteration. We use unseparated subscripts $pm$ for the inter-server noise components to point out their ability to be combined into a matrix structure. Contrarily, the agent-server noise components' subscripts are separated by a comma to highlight a hierarchical structure. Thus, the privatized algorithm can be written as a client update step \eqref{eq:privUpStep}, a server aggregation step \eqref{eq:privAggrStep}, and a server combination step \eqref{eq:privCombStep}: \vspace*{-0.5cm}
\begin{align}
	\w_{p,k,i} &= \w_{p,i-1} - \frac{\mu}{E_{p,k}} \sum_{e=1}^{E_{p,k}}  \grad{w}Q_{p,k}(\w_{p,k,e-1};\bm{x}_{p,k,b}), \label{eq:privUpStep} 
	\\
	\bm{\psi}_{p,i} &=   \frac{1}{L}\sum_{k \in \Li{p}} \w_{p,k,i} + \bm{g}_{p,k,i}, \label{eq:privAggrStep} 
	\\
	\w_{p,i} &= \sum_{m\in \mathcal{N}_p} a_{pm}(\bm{\psi}_{m,i} + \bm{g}_{pm,i}). \label{eq:privCombStep}
\end{align}
The client update step \eqref{eq:privUpStep} follows from \eqref{eq:UpdStep} by combining the multiple epochs for $e=1,2,\cdots, E_{p,k}$ into one update step, with $\w_{p,k,i} = \w_{p,k,E_{p,k}}$ and $\w_{p,k,0} = \w_{p,i-1}$, namely:
\begin{align}
	&\w_{p,k,E_{p,k}} \notag \\
	&= \w_{p,k,E_{p,k}-1}  
	 - \frac{\mu }{E_{p,k}} \grad{w}Q_{p,k}(\w_{p,k,E_{p,k}-1};\bm{x}_{p,k,b}) 
	\notag \\
	&= \w_{p,k,E_{p,k,}-2} 
 -\frac{\mu}{E_{p,k}} \sum_{e=E_{p,k}-1}^{E_{p,k}}   \grad{w}Q_{p,k}(\w_{p,k,e-1};\bm{x}_{p,k,b}) 
	\notag \\
	& = \w_{p,k,0} 
 -\frac{\mu}{E_{p,k}}\sum_{e=1}^{E_{p,k}}
	\grad{w}Q_{p,k}(\w_{p,k,e-1};\bm{x}_{p,k,b}).
\end{align}

\vspace{-0.6cm}
\section{Performance Analysis}
In this section, we show a list of results on the performance of the algorithm. We study the convergence of the privatized algorithm \eqref{eq:privUpStep}-\eqref{eq:privCombStep}, and examine the effect of privatization on performance. 

\subsection{Modeling Conditions}
To go forward with our analysis, we require certain reasonable assumptions on the graph structure and cost functions. 

\begin{assumption}[\textbf{Combination matrix}]\label{ass:adj}
	The combination matrix $A$ describing the graph is symmetric and doubly-stochastic, i.e.: \vspace*{-0.5cm}
	\begin{equation}\label{eq:assAdjSto}
		a_{pm} = a_{mp}, \quad \sum_{m=1}^P a_{mp} = 1.
	\end{equation}
	Furthermore, the graph is strongly-connected and $A$ satisfies:
	\begin{equation}\label{eq:assAdjFull}
	\iota_2 \eqdef \rho\left(A -\frac{1}{P}\mathbbm{1}\mathbbm{1}^\tran \right ) < 1.
	\end{equation}
\qed
\end{assumption}

\begin{assumption}[\textbf{Convexity and smoothness}]\label{ass:fct}
	The empirical risks $J_{p,k}(\cdot)$ are $\nu-strongly$ convex with $\nu > 0$, and the loss functions $Q_{p,k}(\cdot;\cdot)$ are convex, namely:
	\begin{align} \label{eq:assFctConv}
		J_{p,k}(w_2) \geq &J_{p,k}(w_1) + \grad{w}J_{p,k}(w_1)(w_2-w_1) \notag \\
		& + \frac{\nu}{2}\Vert w_2 - w_1 \Vert^2, \\
		Q_{p,k}(w_2;\cdot) \geq & Q_{p,k}(w_1;\cdot) + \grad{w}Q_{p,k}(w_1;\cdot) (w_2 - w_1).
	\end{align}
	Furthermore, the loss functions have $\delta-$Lipschitz continuous gradients:
	\begin{equation}\label{eq:assFctLip}
		\Vert \grad{w}Q_{p,k}(w_2;x_{p,k,n}) - \grad{w}Q_{p,k}(w_1;x_{p,k,n})\Vert \leq \delta \Vert w_2 - w_1\Vert.
	\end{equation}
\qed
\end{assumption}

We also require a bound on the difference between the global optimal model $w^o$ and the local optimal models $w^o_{p,k}$ that optimize $J_{p,k}(\cdot)$. This assumption is used to bound the gradient noise and the incremental noise defined further ahead. It is not a restrictive assumption, and it imposes a condition on when collaboration is sensical among different agents. In other words, since the agents have non-iid data, sometimes their optimal models are too different and collaboration would hurt their individual performance. For example, when considering recommender systems, people in the same country are more likely to get the same movie recommended as opposed to accross different countries. This means, people of the same country might have different models but relatively close contrary to different countries. 

\begin{assumption}[\textbf{Model drifts}]\label{ass:mod}
	The distance of each local model $w_{p,k}^o$ to the global model $w^o$ is uniformly bounded, $\Vert w^o - w_{p,k}^o\Vert \leq \xi$. 
\end{assumption}

\vspace*{-0.5cm}
\subsection{Network Centroid Convergence}
We study the convergence of the algorithm from the network centroid's $\w_{c,i}$ perspective: 
\begin{equation}\label{eq:netCent}
	\w_{c,i} \eqdef \frac{1}{P}\sum_{p=1}^P \w_{p,i}.
\end{equation}
We write the central recursion as:

\begin{align}\label{eq:centRec}
	\w_{c,i} = &\: \w_{c,i-1} - \mu \frac{1}{PL}\sum_{p=1}^P   \sum_{k\in \Li{p}} \frac{1}{E_{p,k}} \sum_{e=1}^{E_{p,k}}\notag \\
	& \times  \grad{w}Q_{p,k} (\w_{p,k,e-1};\bm{x}_{p,k,b}) \notag \\
	&+ \frac{1}{PL} \sum_{p=1}^P \sum_{k\in\Li{p}} \bm{g}_{p,k,i} + \frac{1}{P}\sum_{p,m = 1}^P a_{pm} \bm{g}_{pm,i}.
\end{align}
Next, we define the model error as $\we_{c,i} \eqdef w^o - \w_{c,i}$ and the average gradient noise: \vspace*{-0.1cm}
\begin{equation}\label{eq:gradNoise}
	\bm{s}_i \eqdef \frac{1}{P}\sum_{p=1}^P \bm{s}_{p,i},
\end{equation}
with the per-unit gradient noise $\bm{s}_{p,i}$:
\begin{equation}\label{eq:locgradNoise}
	\bm{s}_{p,i} \eqdef \widehat{\grad{w}J_{p}}(\w_{p,i-1}) - \grad{w}J_p(\w_{p,i-1}),
\end{equation}
and:
\begin{align}\label{eq:servStocGrad}
	\widehat{\grad{w}J_p}(\cdot) &\eqdef \frac{1}{L} \sum_{k\in\Li{p}} \frac{1}{E_{p,k}} \sum_{e=1}^{E_{p,k}} \grad{w}Q_{p,k}(\cdot;\bm{x}_{p,k,b}).
\end{align}
We introduce the average incremental noise $\bm{q}_i$ and the local incremental noise $\bm{q}_{p,i}$, which capture the error introduced by the multiple local update steps:
\begin{align}
	\bm{q}_i &\eqdef \frac{1}{P}\sum_{p=1}^P \bm{q}_{p,i}, \label{eq:incNoise} \\
	\bm{q}_{p,i} &\eqdef \frac{1}{L} \sum_{k \in \Li{p}} \frac{1}{E_{p,k}} \sum_{e=1}^{E_k}  \Big( \grad{w}Q_{p,k}(\w_{p,k,e-1}; \bm{x}_{p,k,b})
	 \notag \\
	&\quad  - \grad{w}Q(\w_{p,i-1}; \bm{x}_{p,k,b})\Big)
\end{align} 
We then arrive at the following error recursion: 
\begin{equation}\label{eq:centErrRec}
	\we_{c,i} = \we_{c,i-1} + \mu \frac{1}{P}\sum_{p=1}^P \grad{w}J_p(\w_{p,i-1}) + \mu \bm{s}_i + \mu \bm{q}_i- \bm{g}_{i},
\end{equation} 
where $\bm{g}_{i}$ is the total added noise at iteration $i$:
\begin{equation}\label{eq:centNoise}
	\bm{g}_{i} \eqdef \frac{1}{PL} \sum_{p=1}^P \sum_{k \in \Li{p}} \bm{g}_{p,k,i} + \frac{1}{P}\sum_{p,m=1}^P a_{pm}\bm{g}_{pm,i}
\end{equation}

We estimate the first and second-order moments of the gradient noise in the following lemma. To do so, we use the fact, shown in previous work (Lemma 1 in \cite{rizk2020federated}), that the individual gradient noise is zero-mean with a bounded second order moment: 
\begin{equation}\label{eq:bdSerGradNoise}
	\mathbb{E}\left\{ \Vert \bm{s}_{p,i} \Vert^2 | \mathcal{F}_{i-1}\right\} \leq \beta_{s,p}^2 \Vert \we_{p,i-1}\Vert^2 + \sigma_{s,p}^2,	
\end{equation}
where the constants are defined as:
\begin{align}
	\beta_{s,p}^2 &\eqdef \frac{6\delta^2}{L} \left( 1 + \frac{1}{K}\sum_{k=1}^K \frac{1}{E_{p,k}}\right) , \\
	\sigma_{s,p}^2 &\eqdef \frac{1}{LK}\sum_{k=1}^K   \left( \frac{12}{E_{p,k}} + 3\right) \notag \\
	&\times \frac{1}{N_{p,k}}\sum_{n=1}^{N_{p,k}} \Vert \grad{w}Q_{p,k}(w^o;x_{p,k,n})\Vert^2,
\end{align}
and $\mathcal{F}_{i-1}$ is the filtration defined over the randomness introduced by all the past subsampling of the data for the calculation of the stochastic gradient. 
Using Assumption \ref{ass:mod}, we can guarantee that $\sigma_{s,p}^2$ is bounded by bounding:
	\begin{equation}
		\Vert \grad{w}Q_{p,k}(w^o; x_{p,k,n})\Vert^2 \leq 2\Vert \grad{w}Q_{p,k}(w^o_{p,k};x_{p,k,n})\Vert^2 + 2\delta^2 \xi^2.
	\end{equation}

\begin{lemma}[\textbf{Estimation of first and second-order moments of the gradient noise}] \label{lem:gradNoise}
	The gradient noise defined in \eqref{eq:gradNoise} is zero-mean and has a bounded second-order moment:
	\begin{align}\label{eq:bdGradNoise}
		\mathbb{E}\left\{ \Vert \bm{s}_i \Vert^2 | \mathcal{F}_{i-1}  \right\}
		&\leq \beta_s^2 \Vert \we_{c,i-1}\Vert^2 + \sigma_s^2 \notag \\
		&\quad + \frac{2}{P}\sum_{p=1}^P \beta_{s,p}^2 \Vert \w_{p,i-1} - \w_{c,i-1}\Vert^2
	\end{align}
	where the constants $\beta_s^2$ and $\sigma_s^2$ are given by:
	\begin{align}
		\beta_s^2 &\eqdef \frac{2}{P}\sum_{p=1}^P \beta_{s,p}^2, \quad
		\sigma_s^2 \eqdef \frac{1}{P}\sum_{p=1}^P\sigma_{s,p}^2.
	\end{align}
\end{lemma}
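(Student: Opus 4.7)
The zero-mean claim follows immediately from linearity of conditional expectation: since each per-unit gradient noise $\bm{s}_{p,i}$ is conditionally zero-mean given $\mathcal{F}_{i-1}$ (as recalled just before the lemma), so is their average $\bm{s}_i = \frac{1}{P}\sum_p \bm{s}_{p,i}$. The plan for the second-order bound is to combine two standard convexity inequalities for $\|\cdot\|^2$ with the already-available per-unit bound \eqref{eq:bdSerGradNoise}.

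First I would apply Jensen's inequality to the square norm to push the average outside:
\begin{equation}
\Big\|\frac{1}{P}\sum_{p=1}^P \bm{s}_{p,i}\Big\|^2 \leq \frac{1}{P}\sum_{p=1}^P \|\bm{s}_{p,i}\|^2,
\end{equation}
take conditional expectation, and invoke \eqref{eq:bdSerGradNoise} to obtain
\begin{equation}
\mathbb{E}\{\|\bm{s}_i\|^2 \mid \mathcal{F}_{i-1}\} \leq \frac{1}{P}\sum_{p=1}^P \beta_{s,p}^2 \|\we_{p,i-1}\|^2 + \frac{1}{P}\sum_{p=1}^P \sigma_{s,p}^2.
\end{equation}
The second sum is by definition $\sigma_s^2$. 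To convert the first sum, written in terms of the local errors $\we_{p,i-1}=w^o-\w_{p,i-1}$, into one involving the centroid error $\we_{c,i-1}$, I would add and subtract $\w_{c,i-1}$ and apply the elementary inequality $\|a+b\|^2 \leq 2\|a\|^2+2\|b\|^2$:
\begin{equation}
\|\we_{p,i-1}\|^2 \leq 2\|\we_{c,i-1}\|^2 + 2\|\w_{c,i-1}-\w_{p,i-1}\|^2.
\end{equation}
Substituting this back yields the announced constant $\beta_s^2 = \frac{2}{P}\sum_p \beta_{s,p}^2$ in front of $\|\we_{c,i-1}\|^2$, plus the disagreement term $\frac{2}{P}\sum_p \beta_{s,p}^2 \|\w_{p,i-1}-\w_{c,i-1}\|^2$, exactly matching the statement.

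There is no real obstacle here; the proof is a short bookkeeping exercise. The only subtlety to flag is that I deliberately use Jensen's inequality rather than exploiting any conditional independence of the $\bm{s}_{p,i}$ across units $p$ — the constants $\beta_s^2$ and $\sigma_s^2$ involve $1/P$-type averages rather than $1/P^2$ sums, which confirms that the lemma is stated in the worst-case (correlated) regime and the above two-line argument is the intended route.
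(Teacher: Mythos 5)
Your proof is correct and follows exactly the route the paper intends: its own proof is the one-line remark "apply Jensen's inequality and the per-unit bounds," and your add-and-subtract of the centroid with $\Vert a+b\Vert^2\leq 2\Vert a\Vert^2+2\Vert b\Vert^2$ is precisely the bookkeeping needed to produce the stated constants $\beta_s^2$ and $\sigma_s^2$ and the disagreement term. No gaps.
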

\begin{proof}
	The above result follows from applying the Jensen's inequality and the bounds on the per-unit gradient noise $\bm{s}_{p,i}$.
\end{proof}

The new term found in the bound of the gradient term is what we call the network disagreement: 
\begin{equation}
	\frac{1}{P} \sum_{p=1}^P \Vert \w_{p,i} - \w_{c,i}\Vert^2.
\end{equation}
It captures the difference in the path taken by the individual models versus the network centroid. We bound this difference in Lemma \ref{lem:netDis}.   However, before doing so, we show that the second order moment of the incremental noise is on the order of $O(\mu)$. From Lemma 5 in \cite{rizk2020federated}, we can bound the individual incremental noise: 
\begin{align}
	\mathbb{E} \Vert \bm{q}_{p,i}\Vert^2 \leq  & a \mu^2 \mathbb{E} \Vert \we_{p,i-1}\Vert^2 + a \mu^2\xi^2 
	\notag \\
	&
	+ \frac{1}{K}\sum_{k=1}^K (b_k\mu^4 + c_k \mu^2)\sigma_{q,p,k}^2,
\end{align}
where the constants are given by:
\begin{align}
	a &\eqdef \frac{4\delta^2}{K}\sum_{k=1}^K  \frac{(E_{p,k}+1)(1-\lambda)-1+\lambda^{E_{p,k}+1}}{E_{p,k}^2(1-\lambda)^2}, \\
	b_k &\eqdef  \frac{2E_{p,k}(E_{p,k}+1)(1-\lambda)^2 - 4E_{p,k}(1-\lambda) + 4\lambda }{E_{p,k}^2 (1-\lambda)^3 } \notag \\
	&\quad -\frac{   2\lambda^{E_{p,k}+1}}{E_{p,k}^2 (1-\lambda)^3},\\
	c_k &\eqdef \frac{E_{p,k}-1}{3E_{p,k}}, \\
	\lambda &\eqdef 1-2\nu\mu + 4\delta^2\mu^2, \\
	\sigma^2_{q,p,k} & \eqdef 3\sum_{n=1}^{N_{p,k}} \Vert \grad{w}Q_{p,k}(w^o_{p,k};x_{p,k,n})\Vert^2.
\end{align}
The following result follows.
\begin{lemma}[\textbf{Estimation of second-order moment of the incremental noise}]
	The incremental noise defined in \eqref{eq:incNoise} has a bounded second-order moment:
	\begin{align}\label{eq:lemQBd}
		\mathbb{E} \Vert \bm{q}_i\Vert^2 \leq &O(\mu) \mathbb{E} \Vert \we_{c,i-1}\Vert^2 + O(\mu)\xi^2 + O(\mu^2 )\sigma_{q}^2  \notag \\
		& + \frac{O(\mu)}{P}\sum_{p=1}^P  \mathbb{E}\Vert \w_{p,i-1} - \w_{c,i-1}\Vert^2,
	\end{align}
where the constant $\sigma_q^2$ is the average of $\sigma_{q,p,k}^2$:
	\begin{equation}
		\sigma_{q}^2 \eqdef \frac{1}{PK}\sum_{p=1}^P\sum_{k=1}^K (b_k\mu^4 + c_k \mu^2)\sigma_{q,p,k}^2.
	\end{equation}
\end{lemma}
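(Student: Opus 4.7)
The plan is to reduce the statement to the per-unit incremental noise bound cited from Lemma~5 of \cite{rizk2020federated}, in exact analogy with how Lemma~\ref{lem:gradNoise} was reduced to the per-unit gradient noise bound \eqref{eq:bdSerGradNoise}. Starting from the definition $\bm{q}_i = \frac{1}{P}\sum_p \bm{q}_{p,i}$ in \eqref{eq:incNoise}, I would apply Jensen's inequality to obtain $\|\bm{q}_i\|^2 \le \frac{1}{P}\sum_p \|\bm{q}_{p,i}\|^2$, take expectations, and substitute the per-unit bound term by term. This already produces an averaged $\xi^2$ contribution and an averaged $\sigma_{q,p,k}^2$ contribution that matches the definition of $\sigma_q^2$ in the statement.

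The remaining task is to replace the per-unit model error $\we_{p,i-1}$, which appears through the $a\mu^2\, \mathbb{E}\|\we_{p,i-1}\|^2$ terms, by the centroid error plus a disagreement contribution. For this I would use the decomposition $\we_{p,i-1} = \we_{c,i-1} + (\w_{c,i-1} - \w_{p,i-1})$ together with the elementary inequality $\|u+v\|^2 \le 2\|u\|^2 + 2\|v\|^2$, then sum over $p$ and divide by $P$. This produces exactly the two remaining ingredients in \eqref{eq:lemQBd}: a coefficient multiplying $\mathbb{E}\|\we_{c,i-1}\|^2$ and a coefficient multiplying the network disagreement $\frac{1}{P}\sum_p \mathbb{E}\|\w_{p,i-1} - \w_{c,i-1}\|^2$.

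Finally, to match the $O(\mu)$ and $O(\mu^2)$ orders stated in \eqref{eq:lemQBd}, I would carry out a careful $\mu$-asymptotic analysis of the constants $a$, $b_k$, $c_k$. Since $\lambda = 1 - 2\nu\mu + 4\delta^2\mu^2$ gives $1-\lambda = \Theta(\mu)$, the apparent singularities from the $(1-\lambda)^{-2}$ and $(1-\lambda)^{-3}$ denominators must be cancelled by matching zeros of the numerators: a Taylor expansion of $\lambda^{E_{p,k}+1}$ about $\mu = 0$ carried to second or third order reveals these cancellations and shows that $a$, $b_k$, and $c_k$ are all $O(1)$ in $\mu$. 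This asymptotic bookkeeping is the main technical obstacle; once it is completed, the listed coefficients collapse into the claimed $O(\mu)$ and $O(\mu^2)$ dependencies after absorbing the $\delta$, $\nu$, $E_{p,k}$-dependent prefactors into the constants hidden in the $O(\cdot)$ notation.
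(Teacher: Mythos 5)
Your overall route is exactly the paper's: Jensen's inequality applied to $\bm{q}_i=\frac{1}{P}\sum_{p}\bm{q}_{p,i}$, substitution of the per-unit bound quoted from Lemma 5 of \cite{rizk2020federated}, and the splitting $\we_{p,i-1}=\we_{c,i-1}+(\w_{c,i-1}-\w_{p,i-1})$ with $\Vert u+v\Vert^2\le 2\Vert u\Vert^2+2\Vert v\Vert^2$ to generate the centroid-error and network-disagreement terms. The paper's own proof is equally terse on this part, so up to the final bookkeeping there is nothing to object to.

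The one genuine problem is your asymptotic claim that $a$, $b_k$ and $c_k$ are all $O(1)$. The paper instead asserts $a=O(\mu^{-1})$, $b_k=O(\mu^{-1})$, $c_k=O(1)$, and the discrepancy matters for how you justify the orders. For $a$ your sharper claim is actually correct: writing $\epsilon=1-\lambda=\Theta(\mu)$, the numerator $(E_{p,k}+1)\epsilon-1+(1-\epsilon)^{E_{p,k}+1}$ vanishes together with its first derivative at $\epsilon=0$, so it is $\Theta(\epsilon^2)$ and cancels the $(1-\lambda)^2$ in the denominator. But for $b_k$ the cancellation you invoke does not occur with the constants as printed: the numerator $2E_{p,k}(E_{p,k}+1)\epsilon^2-4E_{p,k}\epsilon+4(1-\epsilon)-2(1-\epsilon)^{E_{p,k}+1}$ tends to $2$, not $0$, as $\epsilon\to 0$, so no Taylor expansion of $\lambda^{E_{p,k}+1}$ removes the $(1-\lambda)^{-3}$ singularity, and your ``main technical obstacle'' is resolved by an assertion that is false for the stated formula. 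Note that the lemma only needs $b_k\mu^4=O(\mu^2)$, i.e.\ $b_k=O(\mu^{-2})$; you should either establish that weaker order (or the paper's claimed $O(\mu^{-1})$) directly, or trace the constant back to \cite{rizk2020federated} to check whether the printed numerator carries a typo. As written, your argument for the $O(\mu^2)\sigma_q^2$ term has a hole.
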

\begin{proof}
	The above result follows from applying the Jensen inequality and the bounds on the per-unit incremental noise $\bm{q}_{p,i}$. Furthermore, $a = O(\mu^{-1}), b_k = O(\mu^{-1}),$ and $c_k = O(1)$ reduce the expression to \eqref{eq:lemQBd}. 
\end{proof}

We now bound the network disagreement. To do so, we first introduce the eigendecomposition of $A = QH Q^\tran$:
\begin{equation}
	Q \eqdef \begin{bmatrix}
		\frac{1}{\sqrt{P}}\mathds{1} & Q_{\epsilon} 
	\end{bmatrix}, 
	\quad H \eqdef \begin{bmatrix}
		1 & 0 \\0 & H_{\epsilon}
	\end{bmatrix}, 
\end{equation}
where $H_{\epsilon}$ is a diagonal matrix that includes the last $(P-1)$ eigenvalues of $A$ and $Q_{\epsilon}$ their corresponding eigenvectors.

\begin{lemma}[\textbf{Network disagreement}]\label{lem:netDis}
	The average deviation from the centroid is bounded during each iteration $i$:
	\begin{align}
		&\frac{1}{P}\sum_{p=1}^P \mathbb{E}\Vert \w_{p,i} - \w_{c,i}\Vert^2 \notag \\
		&\leq  \frac{ \iota_2^i }{P} \mathbb{E} \Vert (Q_{\epsilon} \otimes I)\ws_0\Vert^2   +    \frac{\iota_2^2 }{P}\sum_{j'=0}^{i-1}\iota_2^{j'}\sum_{p=1}^P \Bigg( \mu^2\bigg(\frac{2\delta^2}{\iota_2(1-\iota_2) }   \notag \\
		&\quad +\beta_{s,p}^2 + O(\mu)  \bigg) \bigg( \lambda_p^{j'} A^{j'}[p] \col{\mathbb{E}\Vert \we_{p,0}\Vert^2}_{p=1}^P + \sum_{j=0}^{j'-1} \lambda_p^j  \notag \\
		&\quad  \times A^j[p]\col{\mu^2 \sigma_{s,p}^2 + O(\mu^2)\xi^2 + O(\mu^3)\sigma_{q,p}^2 + \sigma_{g,p}^2}_{p=1}^P \bigg) \notag \\
		&\quad + \mu^2\frac{2\Vert \grad{w}J_p(w^o)\Vert^2}{\iota_2(1-\iota_2)}+ \mu^2\sigma_{s,p}^2 + O(\mu^3)\xi^2 + O(\mu^4)\sigma_{q,p}^2 \notag \\
		&\quad+ \frac{1}{\iota_2^2} \sigma_{g,p}^2\Bigg),
	\end{align}
	where $\ws_{0} \eqdef \col{\w_{p,0}}_{p=1}^P$ and $\lambda_p \eqdef \sqrt{1-2\nu\mu + \delta^2\mu^2} + \beta_{s,p}^2 \mu^2 + O(\mu^2) \in (0,1)$.
	Then, in the limit:
	\begin{align}
		& \limsup_{i\to \infty} \frac{1}{P}\sum_{p=1}^P \mathbb{E} \Vert \w_{p,i} -\w_{c,i}\Vert^2 \notag \\
		&\leq     
		\frac{\iota_2^2}{P(1-\iota_2)} \sum_{p=1}^P \mu^2 \sigma_{s,p}^2 + \frac{1}{\iota_2^2}\sigma_{g,p}^2  + O(\mu)\sigma_{g,p}^2 + O(\mu^3).
	\end{align}
\end{lemma}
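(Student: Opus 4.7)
The plan is to derive a vector-valued recursion for the stacked network state $\ws_i \eqdef \col{\w_{p,i}}_{p=1}^P$, project it onto the disagreement subspace, unroll in time, and bound the resulting variance terms. Concretely, the combination step \eqref{eq:privCombStep} and the aggregation step \eqref{eq:privAggrStep} allow us to write
\begin{equation*}
\ws_i = (A\otimes I)\bigl(\ws_{i-1} - \mu\,\grad{w}\bm{\mathcal{J}}(\ws_{i-1}) + \mu\,\bm{\mathcal{S}}_i + \mu\,\bm{\mathcal{Q}}_i + \bm{\mathcal{G}}_i\bigr),
\end{equation*}
where the script letters collect the per-unit true gradients, gradient noises \eqref{eq:locgradNoise}, incremental noises, and combined privacy perturbations. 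Subtracting $\mathds{1}\otimes \w_{c,i}$ from both sides and using $(A-\tfrac{1}{P}\mathds{1}\mathds{1}^\tran) = Q_\epsilon H_\epsilon Q_\epsilon^\tran$ from the stated eigendecomposition, one isolates the disagreement component. By orthogonality of $Q$, the quantity $\frac{1}{P}\sum_p \Vert \w_{p,i}-\w_{c,i}\Vert^2$ equals $\frac{1}{P}\Vert (Q_\epsilon^\tran\otimes I)\ws_i\Vert^2$, so the recursion on $(Q_\epsilon^\tran\otimes I)\ws_i$ contracts by a factor whose spectral radius is exactly $\iota_2$.

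Next I would unroll this projected recursion from iteration $0$ up to $i$. The initial condition contributes $\iota_2^i\,\mathbb{E}\Vert (Q_\epsilon\otimes I)\ws_0\Vert^2/P$, which matches the first term of the stated bound. For the driving terms, I would split the per-unit gradient $\grad{w}J_p(\w_{p,i-1})$ as $\grad{w}J_p(\w_{p,i-1})-\grad{w}J_p(w^o)+\grad{w}J_p(w^o)$, using $\delta$-Lipschitzness from Assumption~\ref{ass:fct} to control the first piece by $\delta^2\Vert \we_{p,i-1}\Vert^2$ and carrying the constant term $\Vert \grad{w}J_p(w^o)\Vert^2$ separately. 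Applying Jensen's inequality to the squared norm of a finite sum, together with the second-moment bound \eqref{eq:bdSerGradNoise} on $\bm{s}_{p,j}$ and the incremental noise bound \eqref{eq:lemQBd}, converts each term in the unrolled sum into expressions in the intermediate per-unit mean-square errors $\mathbb{E}\Vert \we_{p,j}\Vert^2$ and in the noise variances $\sigma_{s,p}^2$, $\sigma_{q,p}^2$, $\sigma_{g,p}^2$, weighted by $\iota_2^{j'}$.

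The per-unit errors $\mathbb{E}\Vert \we_{p,j}\Vert^2$ that appear inside the sum must themselves be bounded, and this is where the $\lambda_p^{j'} A^{j'}[p]$ structure in the lemma originates. I would establish a per-unit recursion of the form $\mathbb{E}\Vert \we_{p,j}\Vert^2 \le \sum_m a_{pm}\bigl(\lambda_p\,\mathbb{E}\Vert \we_{m,j-1}\Vert^2 + \mu^2\sigma_{s,m}^2 + O(\mu^2)\xi^2 + O(\mu^3)\sigma_{q,m}^2+\sigma_{g,m}^2\bigr)$, obtained by using strong convexity of $J_{p,k}$ (Assumption~\ref{ass:fct}) on the inner product cross term and Jensen on the combination step to distribute the expectation across neighbors. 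Iterating this inequality $j'$ times produces exactly the term $\lambda_p^{j'} A^{j'}[p]\col{\mathbb{E}\Vert \we_{p,0}\Vert^2}$ plus the nested sum over $j$ weighted by $\lambda_p^j A^j[p]$ of the noise vector. Substituting this back into the unrolled disagreement expression yields the finite-$i$ bound stated in the lemma.

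For the asymptotic bound, I would take $i\to\infty$. The $\iota_2^i$ initial-condition term vanishes, and since $A$ is doubly stochastic with $\lambda_p<1$ (which holds provided $\mu$ is small enough that $1-2\nu\mu+\delta^2\mu^2<1$), the inner double sum is controlled by the geometric series $\sum_{j'=0}^\infty \iota_2^{j'} = 1/(1-\iota_2)$. Grouping $\mu$-powers and absorbing lower-order terms into the $O(\mu)$ and $O(\mu^3)$ symbols yields the stated limiting expression. The main obstacle, which I would reserve the most care for, is the bookkeeping of the nested geometric sums — specifically, confirming that $\sum_{j'=0}^{i-1}\iota_2^{j'}\lambda_p^{j'}$ remains finite with the claimed constants, and that the step converting Lipschitz-bounded gradient increments into $\mathbb{E}\Vert \we_{p,j}\Vert^2$ correctly accounts for the $\mu^2$ prefactor so that the dominant privacy-noise contribution $\sigma_{g,p}^2/\iota_2^2$ emerges with the right scaling and is not dominated by $\mu$-weighted terms.
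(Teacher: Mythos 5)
Your proposal follows essentially the same route as the paper's proof: stack the per-unit models, project onto the disagreement subspace via the eigendecomposition $A=QHQ^\tran$ so that the recursion contracts at rate $\iota_2$, unroll in time, and control the intermediate per-unit errors $\mathbb{E}\Vert\we_{p,j}\Vert^2$ through exactly the per-unit recursion you describe (which the paper isolates as a separate individual-MSE theorem in its appendix, yielding the $\lambda_p^{j'}A^{j'}[p]$ structure), before passing to the limit with geometric series. The only cosmetic difference is that the paper feeds the inter-server perturbations in as $\text{diag}\big((A\otimes I)^\tran\bm{\mathcal{G}}_i\big)$ with $\bm{\mathcal{G}}_i$ a matrix of link-dependent noises rather than a stacked vector multiplied by $(A\otimes I)$, but this does not alter the variance bookkeeping or the final bound.
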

\begin{proof}
	Proof found in Appendix \ref{app:lemmNetDis}.
\end{proof}
Thus, from the above lemma, we see that the individual models gravitate to the centroid model with an error introduced due to the added privatization. The effect of the added noise overpowers that of the gradient and incremental noise, since the later is on the order of the step-size. 

Then, using the above result, we can establish the convergence of the centroid model to a neighbourhood of the true optimal model $w^o$. 

\begin{theorem}[\textbf{Centroid MSE convergence}]\label{thrm:centMSE}
	Under Assumptions \ref{ass:adj}, \ref{ass:fct} and \ref{ass:mod}, the network centroid converges to the optimal point $w^o$ exponentially fast for a sufficiently small step-size $\mu$:
	\begin{align}
		\mathbb{E} \Vert \we_{c,i}\Vert^2 \leq& \lambda_c \mathbb{E} \Vert \we_{c,i-1} \Vert^2  +\mu^2 \sigma_s^2 + O(\mu^{2})\xi^2 + O(\mu^{3})\sigma_q^2   \notag \\
		& + \mathbb{E} \Vert \bm{g}_{i}\Vert^2  + \frac{O(\mu)}{P}\sum_{p=1}^P \mathbb{E}\Vert \w_{p,i-1}-\w_{c,i-1}\Vert^2,
	\end{align}
	where $\lambda_c = \sqrt{1-2\nu\mu + \delta^2\mu^2}  +\beta_s^2\mu^2 + O(\mu^{2}) \in (0,1)$. Then, letting $i$ tend to infinity, we get:
	\begin{align}\label{eq:thrmBd}
		\limsup_{i\to \infty} \mathbb{E} \Vert \we_{c,i}\Vert^2 \leq & \frac{\mu^2 \sigma_s^2  + O(\mu^{2})\xi^2 + O(\mu^{3})\sigma_q^2 + \mathbb{E}\Vert \bm{g}\Vert^2}{1-\lambda_c}  \notag \\&  + \sum_{p=1}^PO(1 ) \sigma_{g,p}^2+ O(\mu).
	\end{align}
\end{theorem}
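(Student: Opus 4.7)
The plan is to square the centroid error recursion \eqref{eq:centErrRec}, take a conditional expectation to eliminate the zero-mean stochastic contributions, extract a strong-convexity contraction from the deterministic descent part, and then invoke Lemma~\ref{lem:gradNoise}, the incremental-noise lemma, and Lemma~\ref{lem:netDis} to absorb the remaining terms; a limit then yields \eqref{eq:thrmBd}. First I condition on $\mathcal{F}_{i-1}$: the gradient noise $\bm{s}_i$ has zero mean and the injected $\bm{g}_i$ is zero-mean and independent of the past, so squaring \eqref{eq:centErrRec} and taking the conditional expectation kills all their cross products with $\we_{c,i-1}$ and with the gradient term, leaving the squared norm of the deterministic descent contribution plus $\mu^2\mathbb{E}\{\|\bm{s}_i\|^2\mid\mathcal{F}_{i-1}\}$, plus $\mathbb{E}\|\bm{g}_i\|^2$, plus $\mu^2\mathbb{E}\|\bm{q}_i\|^2$, plus a single cross term with $\bm{q}_i$ that I handle by Young's inequality with a small parameter so the excess falls into the $O(\mu^2)$ slack appearing below in $\lambda_c$.

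Next, inside the deterministic descent part I add and subtract $\mu\,\grad{w}J(\w_{c,i-1})$, with $J \eqdef \frac{1}{P}\sum_p J_p$, and rewrite the descent term as $\we_{c,i-1} + \mu\grad{w}J(\w_{c,i-1}) + \mu\frac{1}{P}\sum_p(\grad{w}J_p(\w_{p,i-1}) - \grad{w}J_p(\w_{c,i-1}))$. Using the integral mean-value representation of the gradient together with Assumption~\ref{ass:fct}, the first two terms contract as
\[
\bigl\|\we_{c,i-1} + \mu\grad{w}J(\w_{c,i-1})\bigr\|^2 \leq (1 - 2\nu\mu + \delta^2\mu^2)\,\|\we_{c,i-1}\|^2,
\]
while $\delta$-Lipschitzness bounds the residual by $\mu^2\delta^2$ times the average network disagreement. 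A Jensen split with parameter $t=\sqrt{1-2\nu\mu+\delta^2\mu^2}$ then produces $\sqrt{1-2\nu\mu+\delta^2\mu^2}\,\|\we_{c,i-1}\|^2$ on one side and an $O(\mu)$ multiple of the disagreement on the other, which is exactly the structure needed for $\lambda_c$.

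Substituting Lemma~\ref{lem:gradNoise} into $\mu^2\mathbb{E}\|\bm{s}_i\|^2$ adds $\mu^2\beta_s^2\mathbb{E}\|\we_{c,i-1}\|^2 + \mu^2\sigma_s^2$ plus extra disagreement, and substituting the incremental-noise lemma into $\mu^2\mathbb{E}\|\bm{q}_i\|^2$ adds $O(\mu^3)\mathbb{E}\|\we_{c,i-1}\|^2 + O(\mu^3)\xi^2 + O(\mu^4)\sigma_q^2 + O(\mu^3)\text{(disagreement)}$. Collecting the $\|\we_{c,i-1}\|^2$ coefficients assembles $\lambda_c = \sqrt{1-2\nu\mu+\delta^2\mu^2} + \beta_s^2\mu^2 + O(\mu^2)$, which lies in $(0,1)$ for $\mu$ small enough, while the remaining constants line up as $\mu^2\sigma_s^2 + O(\mu^2)\xi^2 + O(\mu^3)\sigma_q^2 + \mathbb{E}\|\bm{g}_i\|^2$ plus the $O(\mu)$ disagreement coupling---the one-step recursion claimed in the theorem. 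Iterating yields the geometric quotient, and substituting the $\limsup$ from Lemma~\ref{lem:netDis}---whose dominant pieces are $O(\mu^2)$ and $\tfrac{1}{P}\sum_p \tfrac{1}{\iota_2^2}\sigma_{g,p}^2$---then multiplying by the $O(\mu)$ prefactor and dividing by $1-\lambda_c=\Theta(\mu)$ converts the $\mu^2$ piece into the trailing $O(\mu)$ of \eqref{eq:thrmBd} and the $\sigma_{g,p}^2$ piece into the $\sum_p O(1)\sigma_{g,p}^2$ term.

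The hardest part will be the bookkeeping: the square-root factor $\sqrt{1-2\nu\mu+\delta^2\mu^2}$ emerges only if the Jensen parameter is chosen to equal that exact value, and the order-of-$\mu$ accounting of the $\xi^2$, $\sigma_q^2$ and disagreement residues must be tracked meticulously so that after division by $1-\lambda_c = \Theta(\mu)$ in the final step the orders $O(\mu^2)\xi^2$ inside the quotient and $O(\mu)$ outside do survive, rather than degrading into $O(1)$ constants with the wrong rate.
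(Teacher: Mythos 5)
Your proposal is correct and follows essentially the same route as the paper's proof: condition on $\mathcal{F}_{i-1}$ to drop the zero-mean cross terms, extract the $(1-2\nu\mu+\delta^2\mu^2)$ contraction from strong convexity and Lipschitzness, absorb the Lipschitz residual into an $O(\mu)$ multiple of the network disagreement, substitute Lemma \ref{lem:gradNoise} and the incremental-noise lemma to assemble $\lambda_c$, and pass to the limit via Lemma \ref{lem:netDis}. Your handling of $\bm{q}_i$ by Young's inequality on the cross term and your single Jensen parameter $\sqrt{1-2\nu\mu+\delta^2\mu^2}$ are algebraically equivalent to the paper's choice of keeping $\bm{q}_i$ inside the squared norm and taking $\alpha=\sqrt[4]{1-2\nu\mu+\delta^2\mu^2}$, so the differences are purely cosmetic.
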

\begin{proof}
	Proof found in Appendix \ref{app:thrmCentMSE}.
\end{proof}
The main term in the above bound is the variance of the added noise with a  dominating factor of $\mu^{-1}$, since:
\begin{align}
	1- \lambda_c &=  1- \sqrt{1-O(\mu) + O(\mu^2)} - O(\mu^2)  \notag \\
	&= O(\mu)- O(\mu^2) = O(\mu) 
\end{align}
which allows us to rewrite the bound as follows:
\begin{align}
	\limsup_{i \to  \infty} \mathbb{E}\Vert \we_{c,i}\Vert^2 &\leq  O(\mu)\sigma_s^2  + O(\mu)\xi^2 + O(\mu^2)\sigma_q^2 
	\notag \\
	& + O(\mu^{-1})\mathbb{E}\Vert \bm{g}\Vert^2  + \sum_{p=1}^P O(1)\sigma_{g,p}^2 + O(\mu),
\end{align}
with $\mathbb{E}\Vert \bm{g}\Vert^2$ representing the variance of the total added noise, independent of time.
While in general decreasing the step-size improves performance, the above result shows that this need not be the case with privatization. Thus, since the added noise impacts the model utility negatively, it is important to choose a privatization scheme that reduces the effect. In what follows, we look closely at such a scheme.

\subsection{Graph Homomorphic Perturbations}
We consider a specific privatization scheme and specialize the above results. The goal of the scheme is to remove the $O(\mu^{-1})$ term from the MSE bounds. Thus, we wish to cancel out the total added noise amongst servers, i.e.:
\begin{equation}
	\sum_{p,m=1}^P a_{pm}\bm{g}_{pm,i} = 0.
\end{equation}
To achieve this, we introduce graph homomorphic perturbations \cite{vlaski2020graphhomomorphic} defined as follows. We assume each server $p$ draws a sample $\bm{g}_{p,i}$ independently from the Laplace distribution $Lap(0,\sigma_g/\sqrt{2})$ with variance $\sigma_{g}^2$. Server $p$ then sets the noise $\bm{g}_{mp,i}$ added to the message sent to its neighbour $m$ as:
\begin{equation}
	\bm{g}_{mp,i} = \begin{cases}
		\bm{g}_{p,i} & m \neq p \\
		- \frac{1-a_{pp}}{a_{pp}} \bm{g}_{p,i}.
	\end{cases}
\end{equation}
Furthermore, utilizing secret sharing methods, we can cancel out the noise added by clients at the server:
\begin{equation}
	\frac{1}{L}\sum_{k\in\Li{p}} \bm{g}_{p,k,i} = 0.
\end{equation}
One example of such methods is Diffie-Hellman key exchange \cite{DifHel}, where every pair of agents shares a secret key in order to securely communicate over a public channel. In the federated learning scenario, we wish to add noise to the messages sent to the server in such a way they cancel out at the server. Thus, we use the shared secret key among pairs of agents to generate the added noise that will cancel out at the server. To generate these shared secret keys, we assume we have a public generator $v$ and a prime modulous $t$. Then, each agent has a secret key, say agent 1 has $c$ and a public key $v^{c}$, which is broadcast. Next, each client raises all the public keys it receives from the other agents by its local secret key in order to get a shared secret key with every other agent. This allows every pair of agents to use their shared secret key as a seed to a pseudo-random generator to generate the noise.
	

Thus, with such a scheme, the noise components proportional to $O(\mu^{-1})$ cancel out in the error recursions, and the remaining error introduced by the noise is controlled by the step-size. Thus, its effect can be mitigated by using a smaller step-size.

\begin{corollary}[\textbf{Centroid MSE convergence under graph homomorphic perturbations}]\label{thrm:centMSE-ghp}
	Under Assumptions \ref{ass:adj}, \ref{ass:fct} and \ref{ass:mod}, the network centroid with graph homomorphic perturbations converges to the optimal point $w^o$ exponentially fast for a sufficiently small step-size $\mu$:
	\begin{align}
		\mathbb{E} \Vert \we_{c,i}\Vert^2 \leq& \lambda_c \mathbb{E} \Vert \we_{c,i-1} \Vert^2  +\mu^2 \sigma_s^2  + O(\mu^{2})\xi^2 + O(\mu^{3})\sigma_q^2 \notag \\
		&+ \frac{O(\mu)}{P}\sum_{p=1}^P \mathbb{E}\Vert \w_{p,i-1}-\w_{c,i-1}\Vert^2.
	\end{align}Then, letting $i$ tend to infinity, we get:
	\begin{align}
		\limsup_{i\to \infty} \mathbb{E} \Vert \we_{c,i}\Vert^2 \leq & \frac{\mu^2 \sigma_s^2 +O(\mu^{2})\xi^2 + O(\mu^{3})\sigma_q^2 }{1-\lambda_c}  \notag \\
		&+ \sum_{p=1}^PO(1 ) \sigma_{g,p}^2  + O(\mu).
	\end{align}
\end{corollary}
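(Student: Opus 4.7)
My plan is to derive the corollary as a direct specialization of Theorem~\ref{thrm:centMSE} by showing that the aggregate noise $\bm{g}_i$ appearing in \eqref{eq:centNoise} vanishes identically under the graph homomorphic construction combined with the client-level secret sharing. Once that cancellation is in hand, every step in the proof of Theorem~\ref{thrm:centMSE} carries over with $\mathbb{E}\|\bm{g}_i\|^2 = 0$, and the two claims of the corollary follow by the same recursion manipulation and limsup arguments used there.

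The core step is the algebraic verification. First, I would note that the secret-sharing prescription $\sum_{k\in\Li{p}} \bm{g}_{p,k,i} = 0$ immediately kills the client-side sum in \eqref{eq:centNoise}. For the inter-server sum, substituting the graph homomorphic rule and separating the diagonal term gives
\begin{equation*}
\sum_{p,m=1}^P a_{pm}\bm{g}_{pm,i} = \sum_{m=1}^P \bm{g}_{m,i}\left(\sum_{p\neq m} a_{pm} - (1-a_{mm})\right),
\end{equation*}
and invoking the doubly-stochastic property $\sum_p a_{pm} = 1$ from Assumption~\ref{ass:adj} makes each bracketed factor vanish. Hence $\bm{g}_i = 0$ deterministically, so $\mathbb{E}\|\bm{g}_i\|^2 = 0$.

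Plugging this into the recursion of Theorem~\ref{thrm:centMSE} reproduces the first inequality of the corollary verbatim, and the limsup statement follows by iterating against the network disagreement bound of Lemma~\ref{lem:netDis} exactly as in the proof of the theorem. I do not anticipate a substantive obstacle. The only subtlety worth being explicit about is that the residual $\sigma_{g,p}^2$ terms in the asymptotic bound are not a remnant of incomplete cancellation at the centroid; they enter through Lemma~\ref{lem:netDis}, because the cancellation argument is global, whereas individual servers remain perturbed by components proportional to $\bm{g}_{p,i}$ and $(1-a_{pp})\bm{g}_{p,i}/a_{pp}$ whose per-server variances are of order $\sigma_{g,p}^2$.
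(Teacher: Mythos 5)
Your proposal is correct and follows essentially the same route as the paper: the paper's proof simply starts from the bound in Theorem~\ref{thrm:centMSE} and sets $\mathbb{E}\Vert\bm{g}\Vert^2=0$ because $\bm{g}_i=0$ under graph homomorphic perturbations. You additionally spell out the cancellation algebra (using column-stochasticity of $A$ and the secret-sharing condition) that the paper leaves implicit, and your remark that the surviving $\sigma_{g,p}^2$ terms enter only through the network disagreement of Lemma~\ref{lem:netDis} is accurate.
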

\begin{proof}
	Starting from \eqref{eq:thrmBd}, and replacing  $\mathbb{E} \Vert \bm{g}\Vert^2 = 0$ because $\bm{g}_{i} = 0$, we get the final result.
\end{proof}

\section{Privacy Analysis}
We study the privacy of the algorithm in terms of differential privacy. We focus on graph homomorphic perturbations and show that the adopted scheme is differentially private. To do so, we first define what it means for an algorithm to be $\epsilon-$differentially private. Therefore, without loss of generality, assume agent $1$ in federated unit $1$ decides to not participate, and its data samples $x_{1,1}$ are replaced by a new set $x'_{1,1}$ with a different distribution. Then, with the new data, the algorithm takes a different path. We denote the new models by $\w'_{p,k,i}$. The idea behind differential privacy is that an outside observant should not be able to distinguish between the two trajectories $\w_{p,k,i}$ and $\w'_{p,k,i}$ and conclude whether agent one participated in the training. More formally, differential privacy is defined bellow.

\begin{definition}[\textbf{$\epsilon(i)-$Differential Privacy}]\label{def:DP}
	We say that the algorithm given in \eqref{eq:privUpStep}--\eqref{eq:privCombStep} is $\epsilon(i)-$differentially private for server $p$ at time $i$ if the following condition holds: 
	\begin{equation}
		\frac{\mathbb{P}\left( \left\{   \left\{ \bm{\psi}_{p,j} + \bm{g}_{pm,j}   \right\}_{m\in \mathcal{N}_p \setminus \{p\} } \right\} _{j=0}^i \bigg | \mathcal{F}_{i} \right)}{\mathbb{P}\left( \left\{   \left\{ \bm{\psi}'_{p,j} + \bm{g}_{pm,j}   \right\}_{m\in \mathcal{N}_p \setminus \{p\} }\right\} _{j=0}^i  \bigg | \mathcal{F}_{i} \right)} \leq e^{\epsilon(i)}.
	\end{equation}
\qed
\end{definition}
Thus, the above definition states that minimaly varried trajectories have comparable probabilities. In addition, the smaller the value of $\epsilon$ is, the higher the privacy guarantee will be. Thus, the goal will be to decrease $\epsilon$ as long as the model utility is not strongly affected.

Next, in order to show that the algorithm is differentially private, we require the sensitivity of the alogorithm. It captures the effect of a change in an input variable on the output. Thus, we write:
\begin{align}
	\Delta(i) &= \max_{p,k} \Vert \w_{p,k,i} - \w'_{p,k,i}\Vert \notag \\
	&= \Vert \w_{1,1,i} - \w'_{1,1,i}\Vert \notag \\
	&= \mu \left\Vert \sum_{j=0}^{i-1}  \widehat{\grad{w}J_{1,1}} (\w_{1,j}) - \widehat{\grad{w}J_{1,1}} (\w'_{1,j})  \right\Vert  \notag \\
	&\leq 2\mu B i,
\end{align}
where the inequality follows from the triangle inequality and the below result in Lemma \ref{lem:bdGrad} on the bound of the gradients evaluated at the models resulting from the algorithm. Thus, a single change in the input data will change the trajectory by at most $2\mu B i$, where $B$ is some constant bound on the gradients. The sensitivity is also linearly proportional to the iteration; thus as time passes the two trajectories diverge.  

\begin{lemma}[\textbf{Bounded gradients along algorithm trajectory}]\label{lem:bdGrad}
	Assuming the loss functions $Q_{p,k}(\cdot;\cdot)$ satisfy Assumption \ref{ass:fct}, then they have bounded gradients along the trajectory of the algorithm: $\Vert \grad{w}Q_{p,k}(\w;x_{p,k,n}) \Vert \leq B,$
	for some constant B  that depends on the starting point of the algorithm $w_0$ and the local and global optimal models. 
\end{lemma}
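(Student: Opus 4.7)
The plan is to reduce the claimed gradient bound to a bound on the iterate distance $\Vert \w_{p,k,i} - w^o_{p,k}\Vert$ via the Lipschitz property, and then to control this distance along the trajectory by an inductive argument across epochs, aggregation, and combination. First I would apply the triangle inequality together with the $\delta$-Lipschitz condition \eqref{eq:assFctLip} from Assumption \ref{ass:fct}:
\begin{equation*}
\Vert \grad{w}Q_{p,k}(\w;x_{p,k,n})\Vert \leq \delta \Vert \w - w^o_{p,k}\Vert + \Vert \grad{w}Q_{p,k}(w^o_{p,k};x_{p,k,n})\Vert.
\end{equation*}
The last term is a constant $G \eqdef \max_{p,k,n}\Vert \grad{w}Q_{p,k}(w^o_{p,k};x_{p,k,n})\Vert$ depending only on the local optima and the finite data. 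Using Assumption \ref{ass:mod}, $\Vert \w - w^o_{p,k}\Vert \leq \Vert \w - w^o\Vert + \xi$, so it suffices to exhibit a uniform bound $R$ on $\Vert \w_{p,k,i} - w^o\Vert$ along the trajectory, since then $B = \delta(R+\xi) + G$.

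To obtain such an $R$, I would argue by induction on $i$ and track $R_i \eqdef \max_{p,k}\Vert \w_{p,k,i} - w^o\Vert$. For the inner SGD epoch \eqref{eq:UpdStep}, the triangle inequality and the preceding Lipschitz bound give
\begin{equation*}
\Vert \w_{p,k,e} - w^o\Vert \leq \Big(1 + \tfrac{\mu\delta}{E_{p,k}}\Big)\Vert \w_{p,k,e-1} - w^o\Vert + \tfrac{\mu}{E_{p,k}}(G+\delta\xi),
\end{equation*}
and unrolling across the $E_{p,k}$ epochs yields growth by at most a factor of $e^{\mu\delta}$ plus a bounded additive constant. The aggregation step \eqref{eq:aggrStep} is a convex combination and the server combination step \eqref{eq:combStep} uses the doubly-stochastic $A$ of Assumption \ref{ass:adj}, so neither step can expand the maximum deviation from $w^o$ (the added noise cancels between the two trajectories $\w_{p,k,i}$ and $\w'_{p,k,i}$ in the sensitivity calculation where this lemma is invoked).

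The hard part will be closing the induction, because the per-iteration multiplier $e^{\mu\delta}$ is larger than one, so the naive recursion $R_i \leq e^{\mu\delta}R_{i-1} + C$ does not on its own give a uniform bound. To rule out unbounded growth I would invoke the $\nu$-strong convexity of each $J_{p,k}$ from Assumption \ref{ass:fct}: averaged across the random sample $\bm{x}_{p,k,b}$ and across epochs, the local SGD steps behave like noisy gradient descent on $J_{p,k}$, which is a contraction towards $w^o_{p,k}$ when $\mu$ is small enough that $1-2\nu\mu+\delta^2\mu^2<1$. Combined with the mean-square convergence already furnished by Lemma \ref{lem:netDis} and Theorem \ref{thrm:centMSE}, this guarantees that the iterates remain in a bounded neighborhood of $w^o$ whose radius $R$ depends only on $w_0$ (through the base case of the induction), on $w^o$ and $\{w^o_{p,k}\}$ (through $\xi$, $G$, and the contraction radius), and on the problem constants $\mu,\nu,\delta$. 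This delivers the claimed constant $B = \delta(R+\xi) + G$ with the stated dependence on the starting point and the local and global optimal models.
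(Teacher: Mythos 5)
Your overall strategy matches the paper's: decompose the gradient via the $\delta$-Lipschitz condition around an optimal model, so that the problem reduces to a uniform bound on the distance of the iterates from $w^o$, and then close a per-iteration recursion using the strong-convexity contraction $1-2\nu\mu+\delta^2\mu^2<1$. This is exactly what the paper does in Appendix D (Theorem \ref{thrm:bdL2Err}), whose conclusion $\Vert\we_{p,i}\Vert\leq\sum_{m}\Vert\we_{m,0}\Vert+O(1)$ is then plugged into the Lipschitz decomposition in Appendix E. You also correctly identify the one real difficulty — the naive epoch-by-epoch recursion has multiplier $e^{\mu\delta}>1$ and does not close on its own.

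There is, however, one genuine soft spot in how you propose to close the induction. You invoke strong convexity "averaged across the random sample" and then appeal to Lemma \ref{lem:netDis} and Theorem \ref{thrm:centMSE}. Those results bound $\mathbb{E}\Vert\we_{c,i}\Vert^2$ and $\frac{1}{P}\sum_p\mathbb{E}\Vert\w_{p,i}-\w_{c,i}\Vert^2$, i.e., they hold only in expectation; a mean-square bound does not imply that the realized trajectory stays in a bounded set, and the constant $B$ is needed \emph{pathwise} because it feeds the sensitivity bound $\Delta(i)\leq 2\mu Bi$ in the differential-privacy argument, which must hold for every realization of the data subsampling. The paper avoids this by making the entire recursion deterministic: the gradient noise and incremental noise are bounded worst-case over the data (via the sample $x_{p,k,\text{max}}$ with largest gradient), contributing only $O(\mu^2)\Vert\we_{p,i-1}\Vert^2$ to the multiplier, so that $\lambda=\sqrt{1-2\nu\mu+\delta^2\mu^2}+O(\mu^2)/(1-\alpha)<1$ holds surely, not just on average. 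Relatedly, note that only the empirical risks $J_{p,k}$ are $\nu$-strongly convex; the per-sample losses $Q_{p,k}$ are merely convex, so the contraction cannot be applied directly to the stochastic epoch update \eqref{eq:UpdStep} — it must be routed through the true gradient of $J_p$ plus deterministically bounded noise terms, which is what the paper's splitting accomplishes. Your sketch gestures at this ("behave like noisy gradient descent") but needs this worst-case, sample-free bookkeeping to be made rigorous.
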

\begin{proof}
	Proof found in Appendix \ref{app:lemBdGrad}.
\end{proof}

Using the bound on the sensitivity and from the definition of differential privacy, we can finally show that the algorithm is differentially private. 
\begin{theorem}[\textbf{Privacy of GFL algorithm}]\label{thrm:priv}
	If the algorithm \eqref{eq:privUpStep}-\eqref{eq:privCombStep} adopts graph homomorphic perturbations, then it is $\epsilon(i)-$differentially private, at time $i$ for a standard deviation of $\sigma_g = \sqrt{2}\mu B(1+i)i / \epsilon(i)$.
\end{theorem}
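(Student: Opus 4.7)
The plan is to invoke the classical Laplace mechanism argument one round at a time and then exploit additive composition across the $i+1$ rounds that appear in Definition~\ref{def:DP}. Conditioning on $\mathcal{F}_i$ freezes the aggregated quantities $\bm{\psi}_{p,j}$ in both worlds (they become deterministic shifts), so the only randomness left in the ratio comes from the fresh Laplace samples $\bm{g}_{p,j}$ drawn independently at every server and every iteration. Independence across $j$ lets the joint density factorize as a product, and the ratio inside the log collapses to a sum of per-round Laplace log-ratios.

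I would then bound each per-round log-ratio via the standard pointwise inequality for a Laplace density of scale $b=\sigma_g/\sqrt{2}$, namely $\log(p_{\mathrm{Lap}}(y)/p_{\mathrm{Lap}}(y'))\leq \|y-y'\|_1/b$, and plug in the sensitivity derived just before the theorem, $\|\bm{\psi}_{p,j}-\bm{\psi}'_{p,j}\|\leq \Delta(j)\leq 2\mu B j$, where Lemma~\ref{lem:bdGrad} supplies the uniform bound $B$ on the stochastic gradients along the trajectory. Summing the $\Delta(j)/b$ contributions for $j=0,1,\dots,i$ gives
\[
\epsilon(i)\;\leq\; \sum_{j=0}^{i}\frac{\Delta(j)}{b}\;=\;\frac{2\sqrt{2}\mu B}{\sigma_g}\cdot\frac{i(i+1)}{2}\;=\;\frac{\sqrt{2}\mu B\, i(1+i)}{\sigma_g},
\]
and isolating $\sigma_g$ reproduces the stated standard deviation.

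The step I expect to be most delicate is justifying that the \emph{correlated} structure of the graph-homomorphic noise does not amplify the privacy leakage when several channels are observed simultaneously: each sending server reuses a single Laplace draw $\bm{g}_{m,j}$ across all of its outgoing links, so the components of the observation vector share underlying randomness. The way I would handle this is to note that the data of the replaced agent $(1,1)$ enters only one branch of the observation at each round (through server $1$), while the other correlated components carry identical $\bm{\psi}_{m,j}$ in both worlds at round $0$; an induction on $j$, combined with the combination step~\eqref{eq:privCombStep} and the sensitivity tracking of Lemma~\ref{lem:bdGrad}, propagates the bound $\Delta(j)\leq 2\mu B j$ cleanly to every branch. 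Once this reduction to channel-by-channel Laplace comparisons is made, the composition computation above closes the argument.
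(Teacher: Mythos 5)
Your proposal matches the paper's proof essentially step for step: condition on $\mathcal{F}_i$ so that the $\bm{\psi}_{p,j}$ are deterministic, factorize the joint Laplace density over iterations, apply the triangle inequality to each per-round ratio, and sum the sensitivities $2\mu B j$ over $j=0,\dots,i$ to obtain the exponent $\sqrt{2}\mu B\,i(1+i)/\sigma_g$, from which the stated $\sigma_g$ follows. Your extra worry about the correlated noise across outgoing links is resolved even more simply than you suggest---under graph homomorphic perturbations every neighbour $m\neq p$ receives the \emph{identical} perturbed message $\bm{\psi}_{p,j}+\bm{g}_{p,j}$, so the multi-channel observation collapses to a single Laplace comparison per round, which is exactly what the paper's product formula implicitly uses.
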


\begin{proof}
	Proof found in Appendix \ref{app:thrmPriv}.
\end{proof}

Thus, the above theorem suggests, if we wish the algorithm to be $\epsilon(i)-$differentially private, then we need to choose the noise variance accordingly. The larger the variance is, the more private the algorithm will be. However, the longer the algorithm is run, we will require a larger noise variance to keep the same level of privacy guarantee. Said differently, if we fix the added noise, then as time passes, the algorithm becomes less private, and more information is leaked. However, with graph-homomorphic perturbations, we can afford to increase the varianve since its effect is constant on the MSE, and thus decreases the leakage.

\section{Experimental Analysis}
We conduct a series of experiments to study the influence of privatization on the GFL algorithm. The aim of the experiments is to show the superior performance of graph homomorphic perturbations to random perturbations, and to study the effect of different parameters on the performance of the algorithm.

\subsection{Regression }
We first start by studying a regression problem on simulated data. We do so for the tractability of the problem. We consider the quadratic loss that has a closed form solution, i.e., a formal expression for the true model $w^o$ is known, which makes the calculation of the mean square error feasible and more accurate.

Therefore, consider  a streaming feature vector $\bm{u}_{p,k,n} \in \mathbb{R}^M$ with output variable $\bm{d}_{p,k}(n) \in \mathbb{R}$ given by:
\begin{equation}\label{eq:LMSdata}
	\bm{d}_{p,k}(n) = \bm{u}_{p,k,n}^\tran w^{\star} + \bm{v}_{p,k}(n),
\end{equation}
where $w^{\star}\in \mathbb{R}^M$ is some generating model, and $\bm{v}_{p,k}(n)$ is some zero-mean Guassian random variable with $\sigma_{v_{p,k}}^2$ variance and independent of $\bm{u}_{p,k,n}$. Then, the optimal model that solves the following problem:
\begin{equation}
	\min_w \frac{1}{P}\sum_{p=1}^P \frac{1}{K}\sum_{k=1}^K \frac{1}{N_{p,k}}\sum_{n=1}^{N_{p,k}} \Vert \bm{d}_{p,k}(n) - \bm{u}_{p,k,n}^\tran w\Vert^2  + \rho \Vert w\Vert^2
\end{equation}
is found to be:
\begin{equation}
	w^o = (\widehat{R}_u + \rho I)^{-1}( \widehat{R}_u w^{\star} + \widehat{r}_{uv}),
\end{equation}
where $\widehat{R}_u$ and $\widehat{r}_{uv}$ are defined as:
\begin{align}
	\widehat{R}_u &\eqdef  \frac{1}{P}\sum_{p=1}^P \frac{1}{K}\sum_{k=1}^K \frac{1}{N_{p,k}} \sum_{n=1}^{N_k} \bm{u}_{p,k,n}\bm{u}_{p,k,n}^\tran, \label{eq:dfRu}\\
	\widehat{r}_{uv} & \eqdef \frac{1}{P}\sum_{p=1}^P \frac{1}{K}\sum_{k=1}^K \frac{1}{N_{p,k}} \sum_{n=1}^{N_k} \bm{v}_{p,k}(n)\bm{u}_{p,k,n}. \label{eq:dfruv}
\end{align}

We consider $P = $10 units, each with $K= 100$ total agents. We assume, $N_{p,k}=100$ for each agent. We randomly generate two-dimensional feature vectors $\bm{u}_{p,k}(n) $ from a Guassian random vector with zero-mean and a randomly generated covarinace matrix $R_{u_{p,k}}$. We then calculate the corresponding outputs according to \eqref{eq:LMSdata}. To make the data non-iid accross agents, we assume the covariance matrix $R_{u_{p,k}}$ is different for each agent, as well as the variance $\sigma_{v_{p,k}}^2$ of the added noise. When running the algorithm, we assume each unit samples at random $L = 11$ agents, and each agent runs $E_{p,k} \in [1,10]$ epochs and uses a mini-batch of $B_{p,k} \in [5,10]$ samples. 

We compare three algorithms: the standard GFL algorithm, the privatized GFL algorithm with random perturbations, and the privatized GFL with homomorphic perturbations. In the first set of simulations, we fix the step-size $\mu =0.7 $ and the regularization parameter $\rho = 0.1$. We fix the variance of the added noise for privatization in both schemes to $\sigma_g^2 = 0.1 $. We then plot the mean-square-deviation at each time step for the centroid model:
\begin{equation}
	\mbox{\rm MSD}_i \eqdef \Vert \w_{c,i} - w^o\Vert^2,
\end{equation}
as seen in Figure \ref{fig:3Alg}. We observe that the privatized GFL with random perturbations has lower performance compared to the other two algorithms. While, using homomorphic perturbations does not result in such a decay in performance. Thus, our suggested scheme does a good job at tracking the performance of the original GFL algorithm, while not compromising with the privacy level. 


\begin{figure}[h]
	\centering
	\includegraphics[width=0.5\textwidth]{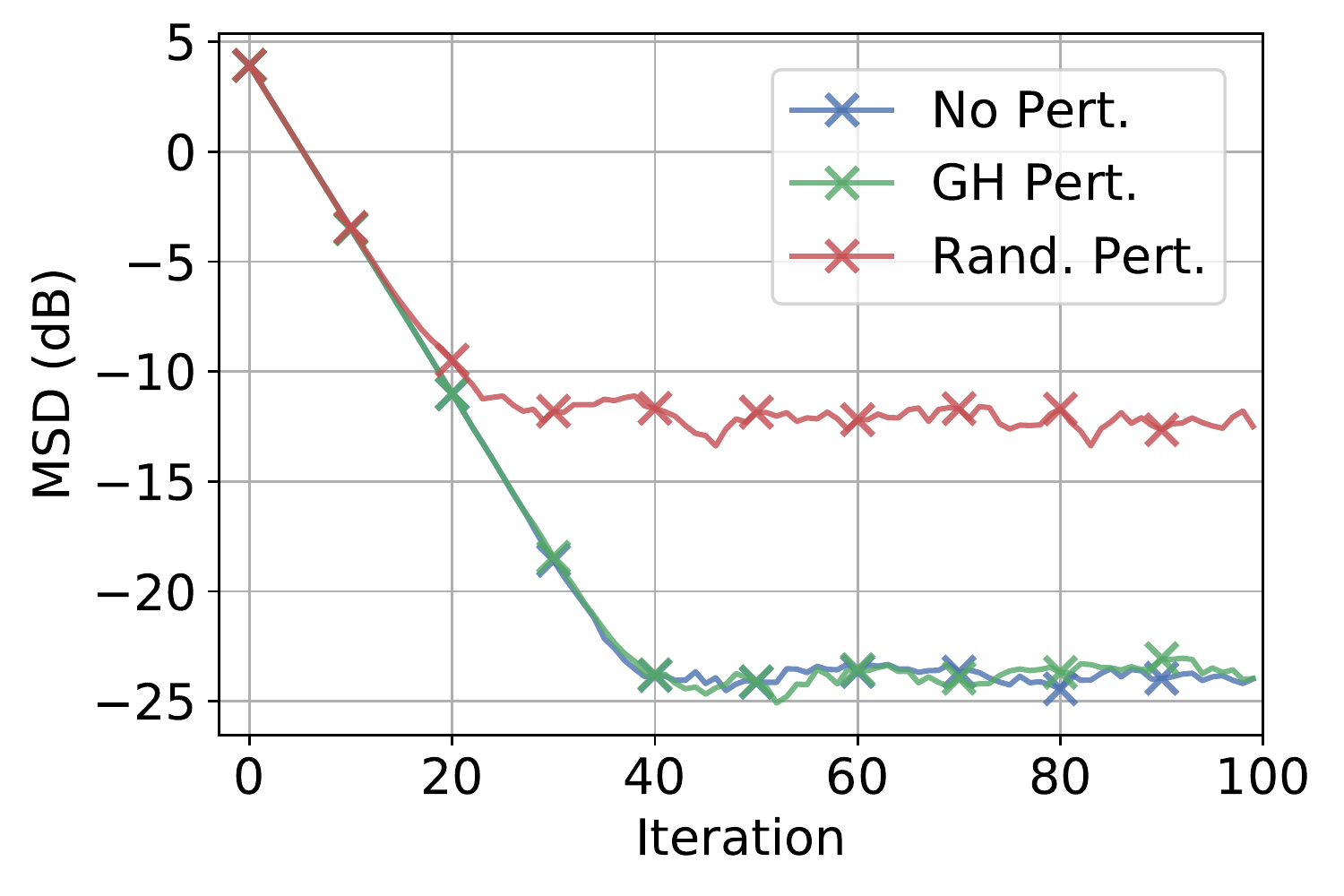}
	\caption{Performance of GFL with no perturbations (blue), with graph homomorphic perturbations (green), and random perturbations (red).}\label{fig:3Alg}
\end{figure}

\begin{figure*}[h]
	\centering
	\centering
	\begin{subfigure}[b]{\textwidth}
		\centering
		\includegraphics[width=\textwidth]{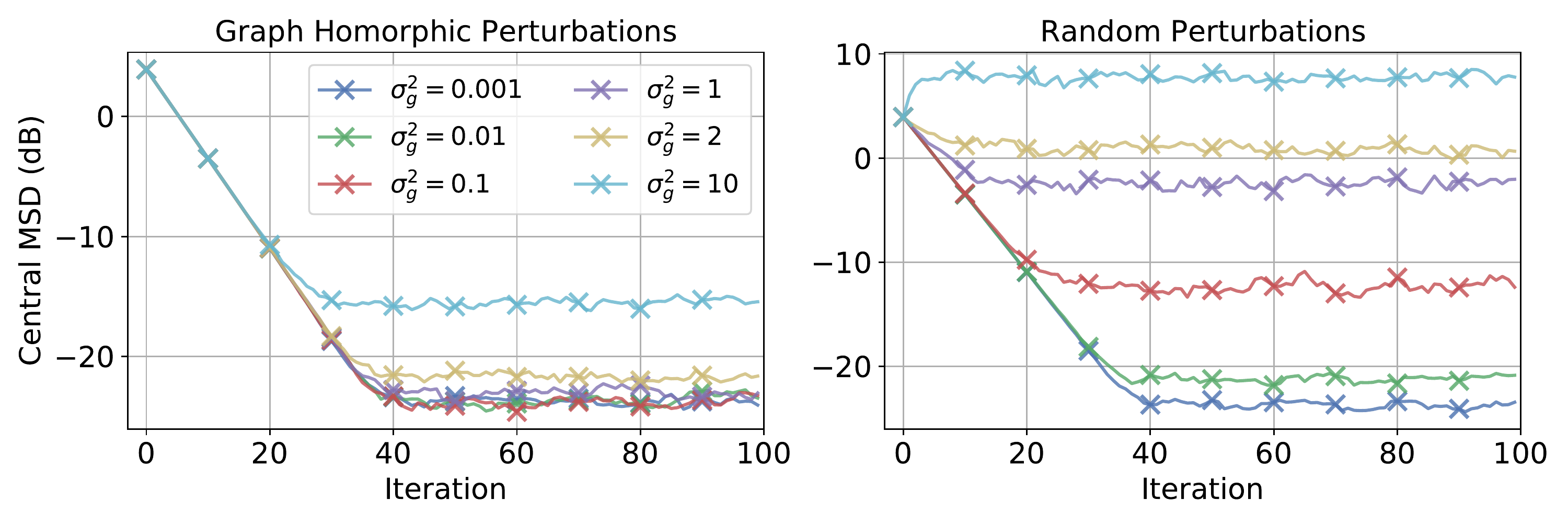}
		\caption{centroid model}\label{fig:varNoiseCent}
	\end{subfigure}
\\ \hfill
	\begin{subfigure}[b]{\textwidth}
		\centering
		\includegraphics[width=\textwidth]{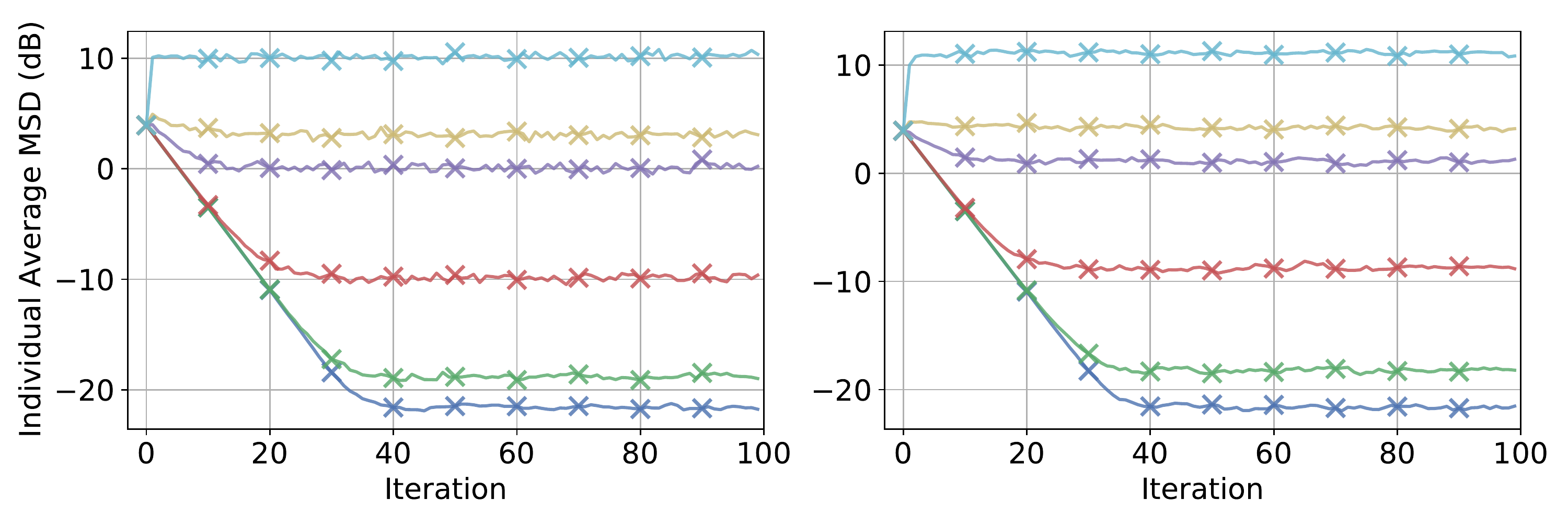}
		\caption{individual models }\label{fig:varNoiseInd}
	\end{subfigure}
	\caption{Performance curves of privatized GFL with varying noise variance.}
\end{figure*}

We next study the extent of the effect of the noise on the model utility. Thus, we run a series of experiments with varying added noise $\sigma_g^2 = \{0.001, 0.01, 0.1,1,2,10\}$ for the two privatized GFL algorithms. We plot the resulting MSD curves in Figure \ref{fig:varNoiseCent}. We obsereve for a fixed step-size, as we increase the variance, the MSD of the algorithm with random perturbations increases significantly as opposed to the algorithm with homomorphic perturbations. Thus, we conclude that the algorithm with random perturbtaions is more sensitive to the variance of the added noise. In fact, at some point, while using random perturbations, for some variance, the algorithm breaks down. While using graph homomorphic perturbations, delays that effect for much larger variance. In addition, as long as the step-size is small enough, we can always control the effect of the graph homomorphic perturbations.

However, if we were to look at the individual MSD for one federated unit, we would discover that the performance of the algorithm decays as the noise variance is increased. Nonetheless, it is not to the extent of random perturbations. We plot in Figure \ref{fig:varNoiseInd} the average individual MSD for the varying noise variance: \vspace*{-0.3cm}
	\begin{equation}
		\mbox{\rm MSD}_{\mbox{\rm avg},i} \eqdef \frac{1}{P}\sum_{p=1}^P \Vert \w_{p,i} - w^o\Vert^2.
	\end{equation}
	We observe that for a fixed noise variance, homomorphic perturbations results in a better performance.  Furthermore, as we increase the noise variance, the network disagreement increases for both schemes. This comes as no surprise and is in accordance with Lemma \ref{lem:netDis}. Furthermore, as previously mentioned, graph homomorphic perturbations have the added value of not being negatively affected by the decrease in the step-size. In addition, even though the improvement does not seem significant, the source of the error of the two schemes is different. Furthemore, the information of the true model is distributed in the network and can be retrieved by running at the end of the learning algorithm a consensus-type step. At that point, the local models no longer contain information about the local data, and thus agents can safely share their models. However, when random perturbations are used, reconstruction is not possible since the information has been lost in the netwrok due to the added perturbations. 

	\begin{figure*}[h!]
	\centering
	\includegraphics[width=\textwidth]{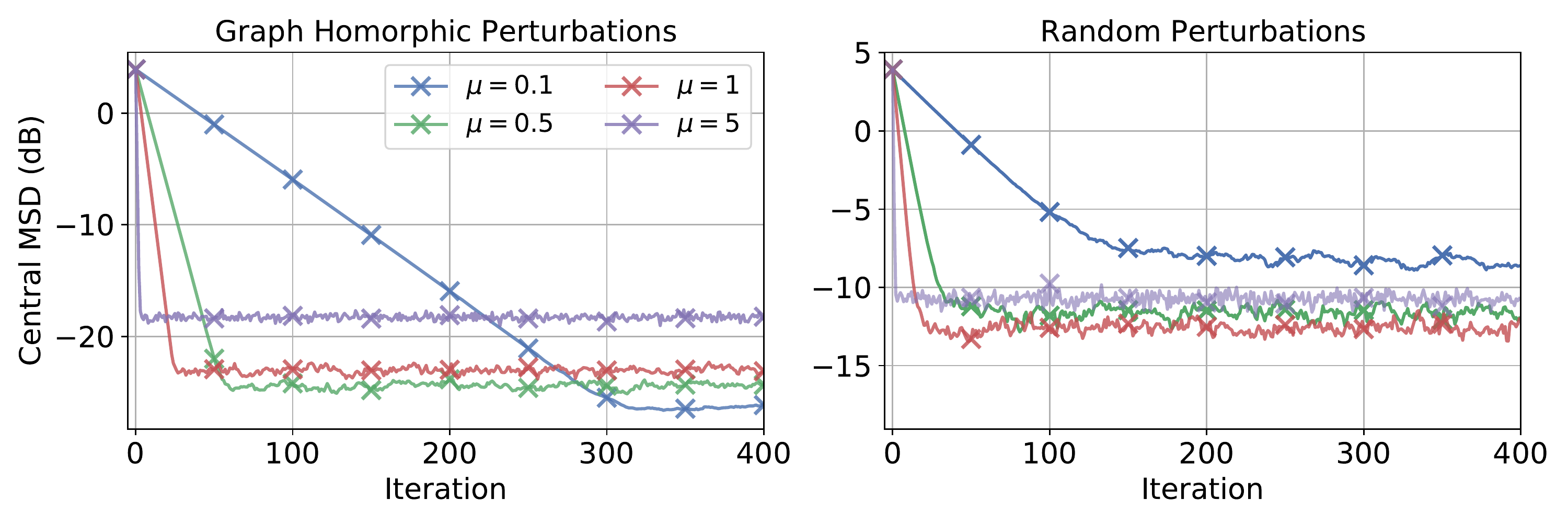}
	\caption{Performance curves of privatized GFL with varying step-size.}\label{fig:varMu}
\end{figure*}
	
	We next fix the noise variance $\sigma_{g}^2 = 0.1$ and varying the step-size $\mu = \{0.1, 0.5, 1, 5  \}$. According to Theorem \ref{thrm:indMSE}, the MSD resulting from random perturbations includes an $O(\mu^{-1})$ term, which is not the case when using graph homomorphic perturbations. Thus, we expect a decrease in the step-size will not significantly affect the privatized algorithm with graph homomorphic perturbations as opposed to random perturbations. Indeed, as seen in Figure \ref{fig:varMu}, as $\mu$ is increased, the final MSD increases; this is probably due to the $O(\mu)\sigma_s^2$ term in the bound. In contrast, for significantly small or large $\mu$, the performance of the privatized algorithm with random perturbations decreases. In addition, what we observe for both privacy schemes, is that the rate of convergence slows down as we decrease the step-size. Thus, there exists an optimal step-size that achieves a good compromise between a fast convergence and a low MSD.


\subsection{Classification}
We now focus on a classification problem applied to a dataset on click rate prediction of ads. We consider the Avazu click through dataset \cite{avazu}. We split the 5101 data unequally among a total of 50 agents. We assume there are $P = 5$ units each with $K = 10$ agents. We add non-idd noise to the data at each agent to change their distributions. We again compare three algorithms: standard GFL, privatized GFL with homomorphic perturbations, and privatized GFL with random perturbations. We use a regularized logistic risk with regularization parameter $\rho = 0.03$. We set the step-size $\mu = 0.5$. We repeat the algorithms for multiple levels of privacy. We then settle on a noise variance $\sigma_g^2 = 0.6$ for which the privatized algorithm with random perturbations still converges. We plot in Figure \ref{fig:Avazu} the testing error on a set of 256 clean samples that were not perturbed with noise to change their distributions. We use the centriod model learned during each iteration to calculate the corresponding testing error. We observe that the graph homorphic perturbations do not hinder the performance of the privatized model. As for random perturbations, they significantly reduce the utility of the learnt model. 
 
\begin{figure}[h]
	\centering
	\includegraphics[width=0.5\textwidth]{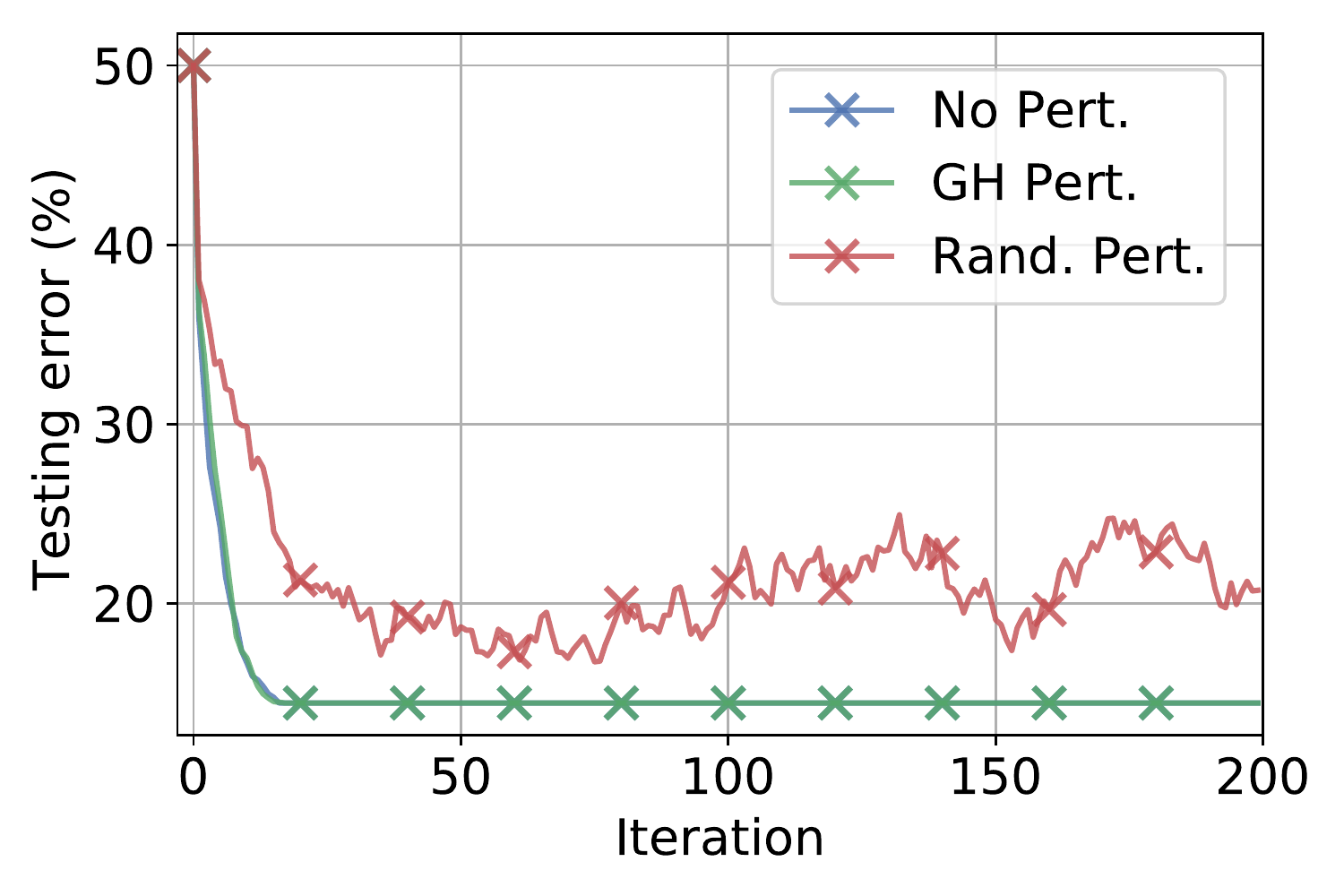}
	\caption{Testing error of GFL with no perturbations (blue), with graph homomorphic perturbations (green), and random perturbations (red).}\label{fig:Avazu}
\end{figure}

\vspace*{-0.2cm}
\section{Conclusion}
In this work, we introduced graph federated learning and implemented an algorithm that guarantees privacy of the data in a differential privacy sense. We showed general privatization based on adding random perturbations to updates in federated learning have a negative effect on the performance of the algorithm. Random perturbations drive the algorithm farther away from the true optimal model. However, we showed by adding graph homomorphic perturbations, which exploit the graph structure, performance can be recovered with guaranteed privacy. We also showed that using dependent perturbations does not result in the same trade-off between privacy and efficiency. Thus, we no longer have to choose what to prioritize, and instead, we can have both a highly privatized algorithm with a good model utility.

\appendices
\section{Secondary Result: Individual MSE Performance}\label{app:thrmIndMSE}
We first introduce the following theorem, which will be used to bound the network disagreement. We loosely bound the individual MSE for each federated unit. A tighter bound can be found, however, it is not needed. 

\begin{theorem}[\textbf{Individual MSE convergence}]\label{thrm:indMSE}
	Under Assumptions \ref{ass:adj}, \ref{ass:fct} and \ref{ass:mod}, the individual models converge to the optimal model $w^o$ exponentially fast for a sufficiently small step-size:
	\begin{align}
		&\mbox{\rm col}\{\mathbb{E} \Vert \we_{p,i} \Vert^2 \}_{p=1}^P \notag \\
		&\preceq \Lambda^i  \mbox{\rm col}\{\mathbb{E}\Vert \we_{p,0} \Vert^2 \}_{p=1}^P \notag \\
		&+  \sum_{j=0}^i \Lambda^j  \mbox{\rm col}\{\mu^2\sigma_{s,p}^2+ O(\mu^2)\xi^2 + O(\mu^3)\sigma_{q,p}^2 + \sigma_{g,p}^2 \}_{p=1}^P ,
	\end{align}
where $\preceq$ is the elementwise comparison, $\Lambda$ is a diagonal matrix with the $p^{th}$ entry given by $\lambda_p = \sqrt{1-2\nu\mu + \delta^2\mu^2} + \beta_{s,p}^2\mu^2 + O(\mu^2)\in (0,1)$, $\sigma_{q,p}^2$ the average of $\sigma_{q,p,k}^2$, and $\sigma_{g,p}^2$ is the total variance introduced by the noise added at server $p$. Then, taking the limit of $i$ to infinity:

\begin{align}
&	\limsup_{i\to \infty} \col{\mathbb{E} \Vert \we_{p,i} \Vert^2 }_{p=1}^P\notag \\ 
& \preceq (I - \Lambda)^{-1}  
\col{\mu^2\sigma_{s,p}^2 + O(\mu^2)\xi^2 + O(\mu^3)\sigma_{q,p}^2 + \sigma_{g,p}^2 }_{p=1}^P.
\end{align}
\end{theorem}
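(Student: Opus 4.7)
\medskip
\noindent\textbf{Proof plan.} The plan is to derive a per-unit error recursion by tracing $w^o - \bm{\psi}_{m,i}$ through the client update \eqref{eq:privUpStep} and server aggregation \eqref{eq:privAggrStep}, then apply Jensen's inequality in the combination step \eqref{eq:privCombStep}, and finally iterate the resulting inequality in matrix form.

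First I would subtract $w^o$ from both sides of \eqref{eq:privCombStep}, using the doubly-stochastic property $\sum_{m \in \mathcal{N}_p} a_{pm} = 1$ from Assumption \ref{ass:adj}, to obtain $\we_{p,i} = \sum_{m} a_{pm}[(w^o - \bm{\psi}_{m,i}) - \bm{g}_{pm,i}]$. I would then substitute \eqref{eq:privAggrStep} together with the collapsed client update \eqref{eq:privUpStep}, and add and subtract $\mu \grad{w} J_m(\w_{m,i-1})$ and $\mu \grad{w} J_m(w^o)$ to expose the gradient noise $\bm{s}_{m,i}$ of \eqref{eq:locgradNoise}, the incremental noise $\bm{q}_{m,i}$ of \eqref{eq:incNoise}, a drift bias $\mu \grad{w} J_m(w^o)$, and the client privacy noise terms.

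Next I would apply Jensen in the convex combination, $\|\sum_m a_{pm} x_m\|^2 \leq \sum_m a_{pm}\|x_m\|^2$, together with the splitting $\|x+y\|^2 \leq (1+\alpha)\|x\|^2 + (1+\alpha^{-1})\|y\|^2$ with $\alpha = O(\mu)$, to isolate the deterministic contraction from the noise and bias pieces. For the deterministic part, the classical $\nu$-strongly-convex, $\delta$-smooth argument gives $\|\we_{m,i-1} + \mu(\grad{w} J_m(\w_{m,i-1}) - \grad{w} J_m(w^o))\| \leq \sqrt{1-2\nu\mu+\delta^2\mu^2}\,\|\we_{m,i-1}\|$, producing the $\sqrt{\cdot}$ factor inside $\lambda_p$. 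Taking conditional expectation on $\mathcal{F}_{i-1}$, the gradient-noise bound \eqref{eq:bdSerGradNoise} contributes $\beta_{s,m}^2 \mu^2 \|\we_{m,i-1}\|^2 + \mu^2 \sigma_{s,m}^2$ (which produces the $\beta_{s,p}^2 \mu^2$ correction in $\lambda_p$); the incremental-noise bound adds $O(\mu^2)\|\we_{m,i-1}\|^2 + O(\mu^2)\xi^2 + O(\mu^3)\sigma_{q,m}^2$; the drift bias $\mu\grad{w}J_m(w^o)$ is absorbed into $O(\mu^2)\xi^2$ using Assumption \ref{ass:mod} and Lipschitz smoothness; and the additive privacy noise contributes the $\sigma_{g,p}^2$ term.

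Collecting these estimates yields a per-unit inequality $\mathbb{E}\|\we_{p,i}\|^2 \leq \sum_m a_{pm}(\lambda_m \mathbb{E}\|\we_{m,i-1}\|^2 + N_m) + N'_p$, i.e., $v_i \preceq A\Lambda v_{i-1} + \vec{N}$ in matrix form with $v_i = \col{\mathbb{E}\|\we_{p,i}\|^2}_{p=1}^P$. Since the authors explicitly present this as a loose bound, I would replace $A\Lambda$ by the diagonal $\Lambda$ by upper-bounding each row using row-stochasticity of $A$, leaving $v_i \preceq \Lambda v_{i-1} + \vec{N}$. Unrolling gives the claimed finite-$i$ bound, and the $i\to\infty$ limit follows since $\lambda_p = 1 - O(\mu) \in (0,1)$ for sufficiently small $\mu$ by Assumption \ref{ass:fct}, so $\Lambda^i \to 0$ and $\sum_{j=0}^\infty \Lambda^j = (I-\Lambda)^{-1}$. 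The main obstacle is this passage from $A\Lambda$ to $\Lambda$ at the matrix level: a tight derivation would keep $(A\Lambda)^i$ and invoke spectral properties of $A$, but since the theorem is only used downstream to bound the network disagreement in Lemma \ref{lem:netDis} (where mixing is re-examined carefully via the eigendecomposition of $A$), the diagonal surrogate is sufficient.
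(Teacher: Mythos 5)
Your proposal follows essentially the same route as the paper's proof in Appendix~A: subtract $w^o$ in the combination step, separate the gradient, incremental, and privacy noises, apply Jensen with $\alpha=\sqrt{1-2\nu\mu+\delta^2\mu^2}$ together with strong convexity and Lipschitz smoothness to obtain the per-unit contraction $\lambda_m$, assemble the coupled recursion $\col{\mathbb{E}\Vert\we_{p,i}\Vert^2}\preceq \Lambda A\,\col{\mathbb{E}\Vert\we_{p,i-1}\Vert^2}+A\,\col{\cdot}$, and then discard the stochastic matrix to leave the diagonal $\Lambda$ before unrolling and letting $i\to\infty$. The only deviations are minor: you expose the drift bias $\mu\grad{w}J_m(w^o)$ explicitly (the paper silently folds it into the contraction step, and a straightforward Jensen split would actually make it $O(\mu)\xi^2$ rather than your claimed $O(\mu^2)\xi^2$), and you flag up front the same loose replacement of the product with $\Lambda$ alone that the paper also performs without further justification.
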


\begin{proof}
	Focusing on the error of a single server $p$, we can verify that:
	\begin{align}
	&	\mathbb{E} \{\Vert \we_{p,i}\Vert^2 | \mathcal{F}_{i-1} \}\notag \\
		& \stackrel{(a)}{=} \mathbb{E} \Bigg\{\bigg\Vert \sum_{m \in \mathcal{N}_p} a_{pm} \big( \we_{m,i-1}  + \mu \grad{w}J_m(\w_{m,i-1}) \notag \\
		&\quad + \mu \bm{q}_{m,i} \big) \bigg\Vert^2  \bigg |\mathcal{F}_{i-1} \Bigg\}\notag \\
		&\quad + \mu^2 \mathbb{E}\left\{ \bigg\Vert \sum_{m\in \mathcal{N}_p} a_{pm} \bm{s}_{m,i} \bigg\Vert^2 \bigg| \mathcal{F}_{i-1} \right\} \notag \\
		&\quad +\mathbb{E} \left\{\bigg \Vert \sum_{m \in \mathcal{N}_p}  \frac{a_{pm}}{L} \sum_{k\in\Li{m}} \bm{g}_{m,k,i} \bigg\Vert^2 \bigg| \mathcal{F}_{i-1} \right\} \notag \\
		&\quad + \mathbb{E} \left\{ \bigg\Vert \sum_{m\in \mathcal{N}_p} a_{pm}\bm{g}_{pm,i} \bigg\Vert^2 \bigg | \mathcal{F}_{i-1} \right\}, \notag \\
		&\stackrel{(b)}{\leq}  \sum_{m\in \mathcal{N}_p} a_{pm} \Bigg(\frac{1}{\alpha} \Vert \we_{m,i-1} + \mu \grad{w}J_m(\w_{m,i-1})\Vert^2 \notag \\
		&\quad + \frac{\mu^2}{1-\alpha}\left(O(\mu)\Vert \we_{m,i-1}\Vert^2 + O(\mu) \xi^2 + O(\mu^2) \sigma_{q,m}^2 \right)  \notag \\
		&\quad +  \mu^2  \left(\sigma_{s,m}^2+\beta_{s,m}^2\Vert  \we_{m,i-1}\Vert^2 \right) + \frac{1}{LK}\sum_{k=1}^K \mathbb{E}\Vert \bm{g}_{m,k,i}\Vert^2 \notag \\
		&\quad + \mathbb{E}\Vert \bm{g}_{pm,i}\Vert^2  \Bigg), \notag
		 \\	
		&\stackrel{(c)}{\leq} \sum_{m\in \mathcal{N}_p}a_{pm} \Bigg( \bigg( \frac{1-2\nu\mu + \delta^2 \mu^2 }{\alpha}  + \beta_{s,m}^2\mu^2  \notag \\
		&\quad + \frac{O(\mu^3)}{1-\alpha} \bigg) \Vert \we_{m,i-1}\Vert^2  + \mu^2 \sigma_{s,m}^2+ \frac{O(\mu^3)\xi^2 + O(\mu^4)\sigma_{q,m}^2}{1-\alpha}  \notag \\
		& \quad  + \frac{1}{LK}\sum_{k=1}^K \mathbb{E}\Vert \bm{g}_{m,k,i}\Vert^2 + \mathbb{E}\Vert \bm{g}_{pm,i}\Vert^2 \Bigg),
	\end{align}
where we define $\sigma_{q,m}^2$ to be the average of $\sigma_{q,m,k}^2$. Step $(a)$ follows from independence of random variables and the zero-mean of the gradient noise and the added noise, $(b)$ from Jensens' inequality and the bound on the gardient noise \eqref{eq:bdSerGradNoise} and the incremental noise \eqref{eq:lemQBd}, $(c)$ from $\nu$-strong convexity and $\delta$-Lipschtz continuity.
Then, choosing $\alpha = \sqrt{1-2\nu\mu + \delta^2 \mu^2} = 1-O(\mu)$ and taking the expectation over the filtration, we get:
\begin{align}
	\mathbb{E}\Vert \we_{p,i}\Vert^2 &\leq \sum_{m\in \mathcal{N}_p}a_{pm}\left( \lambda_m \mathbb{E}\Vert \we_{m,i-1}\Vert^2 + \mu^2\sigma_{s,m}^2 + O(\mu^2)\xi^2 \right. \notag \\
	&\quad \left.+ O(\mu^3)\sigma_{q,m}^2 + \sigma_{g,m}^2 \right), 
\end{align}
where we introduce the constants $\lambda_m$ and  $\sigma_{g,m}^2$:
\begin{equation}
	\lambda_m \eqdef \sqrt{1-2\nu\mu +\delta^2\mu^2} + \beta_{s,m}^2 \mu^2 + O(\mu^2).
\end{equation}
 Next, taking the column vector of every local mean-square-error, we get the following bound in which we drop the indexing from the column vectors:
\begin{align}
	&\col{\mathbb{E}\Vert \we_{p,i}\Vert^2} \notag \\
	 &\preceq \Lambda A \:	\col{\mathbb{E}\Vert \we_{p,i-1}\Vert^2}+ A \: \col{\mu^2\sigma_{s,p}^2  + \sigma_{g,p}^2+ O(\mu^2)\xi^2}
	 \notag \\
	 	&\quad + A\:\col{  O(\mu^3)\sigma_{q,p}^2 }, 
	 	\notag \\
	&\preceq \Lambda^i A^i \col{\mathbb{E}\Vert \we_{p,0}\Vert^2} + \sum_{j=0}^i \Lambda^j A^j \col{\mu^2\sigma_{s,p}^2 + \sigma_{g,p}^2 } \notag \\
	&\quad + \Lambda^jA^j\col{  O(\mu^2)\xi^2+O(\mu^3)\sigma_{q,p}^2 }, \notag \\
	&\preceq \Lambda^i \col{\mathbb{E}\Vert \we_{p,0}\Vert^2} + \sum_{j=0}^i \Lambda^j \col{\mu^2\sigma_{s,p}^2 + \sigma_{g,p}^2 + O(\mu^2)\xi^2} \notag \\
	&\quad + \Lambda^j\col{ O(\mu^3)\sigma_{q,p}^2 },
\end{align}
where we define the diagonal matrix $\Lambda$ with $\lambda_p$ as entries on the diagonal.
Then choosing $\mu$ small enough such that $\lambda_p < 1$ for every $p$, we know the limit of $\Lambda^i$ as $i$ goes to infinity is zero. Furthermore, if the eigenvalues of $\Lambda$ are less than 1, which they are, then the geometric series converges to $(I - \Lambda )^{-1}$.  Thus, we get the desired result.

\end{proof}

\section{Proof of Lemma \ref{lem:netDis}}\label{app:lemmNetDis}
	Consider the aggregate model vector, i.e., $\ws_{i} \eqdef \col{\w_{p,i}}_{p=1}^P$, for which we write the model recursion as:
	\begin{align}\label{eq:aggModRec}
		\ws_i = &(A \otimes I)^\tran \Bigg(\ws_{i-1} - \mu \col{\grad{w}J_p(\w_{p,i-1}) + \bm{s}_{p,i} + \bm{q}_{p,i}}  \notag \\
		&  \left. +  \col{\frac{1}{L}\sum_{k\in \Li{p} } \bm{g}_{p,k,i}} \right) +  \text{diag}\big( (A \otimes I)^\tran \bm{\mathcal{G}}_i \big), 
	\end{align}
	where $\bm{\mathcal{G}}_i$ is a matrix whose entries are the noise $\bm{g}_{pm,i}$, and the diag$(\cdot)$ function extracts the diagonal entries of a matrix and transforms them into a column vector. 

	Since $A$ is doubly-stochastic, then it admits an eigendecomposition of the form $A = QH Q^\tran$, with the first eigenvalue equal to 1 and its corresponding eigenvector equal to $\mathds{1}/\sqrt{P}$.
	
	Next, we define the extended centroid model $\ws_{c,i} \eqdef \left(\frac{1}{P} \mathds{1}\mathds{1}^\tran \otimes I\right) \ws_{i}$, and write:
	\begin{align}
		\ws_i - \ws_{c,i} 
		&= \left(I - \frac{1}{P}\mathds{1}\mathds{1}^\tran \otimes I \right)\ws_{i} \notag \\
		&= \left( (Q^{\tran}\otimes I ) (Q\otimes I ) - \frac{1}{P}\mathds{1}\mathds{1}^\tran \otimes I \right)\ws_{i}	 \notag \\
		&= (Q_{\epsilon}^\tran \otimes I)(Q_{\epsilon} \otimes I) \ws_i \notag \\
		&= (Q_{\epsilon}^\tran \otimes I)H_{\epsilon} (Q_{\epsilon} \otimes I) \Bigg(  \ws_{i-1} \notag \\
		&\quad -\mu\col{ \grad{w}J_p(\w_{p,i-1}) + \bm{s}_{p,i} + \bm{q}_{p,i}}  \notag \\
		&\quad + (Q_{\epsilon}^\tran \otimes I )(Q_{\epsilon} \otimes I)\text{diag} \left(  (A\otimes I)^\tran \bm{\mathcal{G}}_i \right) \notag
		\end{align}
	\begin{align}
		&\quad \left. + \col{\frac{1}{L} \sum_{k \in \Li{p}} \bm{g}_{p,k,i}} \right)  .
	\end{align}
Then, taking the conditional expectation given the past models of $\Vert (Q_{\epsilon} \otimes I)\ws_{i}\Vert^2$, we can split the gradient noise and the added privacy noise from the model and the true gradient. Taking again the expectation over the past data, and then using the sub-multiplicity property of the norm followed by Jensen's inequality, we have:
\begin{align}
	&\mathbb{E} \Vert (Q_{\epsilon} \otimes I) \ws_{i}\Vert^2  \notag \\
	&\leq  \Vert H_{\epsilon}\Vert^2\Bigg( \mathbb{E} \left\Vert  
	(Q_{\epsilon} \otimes I)  \ws_{i-1} \right. \notag \\
	& \left. - 
	(Q_{\epsilon} \otimes I) \mu\col{ \grad{w}J_p(\w_{p,i-1}) + \bm{q}_{p,i} } \right\Vert^2  
	 \notag \\
	& + \mu^2\Vert Q_{\epsilon} \otimes I\Vert^2 \sum_{p=1}^P\mathbb{E} \Vert  \bm{s}_{p,i}\Vert^2
	 \notag \\
	&  \left. + \Vert Q_{\epsilon} \otimes I \Vert^2\sum_{p=1}^P \mathbb{E} \left\Vert \frac{1}{L}\sum_{k\in \Li{p}} \bm{g}_{p,k,i}\right\Vert^2  \right)
	\notag \\
	& + \Vert Q_{\epsilon} \otimes I \Vert^2 \mathbb{E} \Vert \text{diag}\left((A\otimes I)^\tran \bm{\mathcal{G}}_i \right)\Vert^2
	 \notag \\
	&\leq  \Vert H_{\epsilon}\Vert^2 \Bigg( \frac{1}{\Vert H_{\epsilon}\Vert }   \mathbb{E} \Vert (Q_{\epsilon} \otimes I)\ws_{i-1}\Vert^2
	 \notag \\
	& + \frac{\mu^2 \Vert Q_{\epsilon} \otimes I \Vert^2}{1-\Vert H_{\epsilon}\Vert  }  \sum_{p=1}^P  \mathbb{E} \Vert \grad{w}J_p(\w_{p,i-1}) + \bm{q}_{p,i}\Vert^2 
	\notag \\
	&  + \mu^2 \Vert Q_{\epsilon} \otimes I \Vert^2 \sum_{p=1}^P \mathbb{E} \Vert \bm{s}_{p,i}\Vert^2 
	\notag \\
	& \left. + \Vert Q_{\epsilon} \otimes I \Vert^2 \sum_{p=1}^P \mathbb{E} \left\Vert \frac{1}{L}\sum_{k\in \Li{p}} \bm{g}_{k,p,i} \right\Vert^2  \right) 
	\notag \\
	&+\Vert Q_{\epsilon} \otimes I\Vert^2  \mathbb{E}\Vert \text{diag} \left( (A\otimes I)^\tran \bm{\mathcal{G}}_i \right) \Vert^2 .
\end{align}
Next, we focus on each individual term. Using Jensen for some constant $\alpha$ and then the Lipschitz condition and the bound on the incremental noise, we can bound the below norm as follows:
\begin{align}
	&\mathbb{E} \Vert \grad{w}J_p(\w_{p,i-1}) + \bm{q}_{p,i}\Vert^2 \notag \\
	&\leq \frac{2}{\alpha} \left( \delta^2 \mathbb{E}\Vert \we_{p,i-1}\Vert^2 + \Vert \grad{w}J_p(w^o)\Vert^2 \right)\notag \\
	&+ \frac{1}{1-\alpha } \left(O(\mu)\mathbb{E}\Vert \we_{p,i-1}\Vert^2 + O(\mu)\xi^2 + O(\mu^2) \sigma_{q,p}^2 \right).
\end{align}
Using  the bound on the gradient noise \eqref{eq:bdSerGradNoise}, we get another $\mathbb{E} \Vert \we_{p,i-1}\Vert^2$ term, which can be bounded by the result in Theorem \ref{thrm:indMSE}. Thus, we write:
\begin{align}
& 	\frac{1}{1-\Vert H_{\epsilon}\Vert} \mathbb{E}\Vert \grad{w}J_p(\w_{p,i-1}) + \bm{q}_{p,i}\Vert^2 +  \mathbb{E}\Vert \bm{s}_{p,i}\Vert^2 \notag \\
& \leq  \left( \frac{2\delta^2 }{\alpha(1- \Vert H_{\epsilon}\Vert) } +\beta_{s,p}^2 + \frac{O(\mu)}{(1-\alpha)(1-\Vert H_{\epsilon}\Vert)} \right)\mathbb{E}\Vert \we_{p,i-1}\Vert^2 \notag \\
&\quad + \frac{ 2\Vert \grad{w}J_p(w^o)\Vert^2}{\alpha(1-\Vert H_{\epsilon}\Vert)}+ \sigma_{s,p}^2 + \frac{O(\mu)\xi^2 + O(\mu^2)\sigma_{q,p}^2}{(1-\alpha)(1-\Vert H_{\epsilon}\Vert)}   \notag 
\end{align}
\begin{align}
&\leq \left( \frac{2\delta^2 }{\alpha(1- \Vert H_{\epsilon}\Vert )} +\beta_{s,p}^2 + \frac{O(\mu)}{(1-\alpha)(1-\Vert H_{\epsilon}\Vert)} \right) \notag \\
&\quad \times\Bigg( \lambda_p^i A^i[p] \col{\mathbb{E}\Vert \we_{p,0}\Vert^2} + \sum_{j=0}^{i-1}\lambda_p^j A^j[p]  \notag \\
& \quad  \times \col{\mu^2 \sigma_{s,p}^2+O(\mu^2)\xi^2 + O(\mu^3)\sigma_{q,p}^2+ \sigma_{g,p}^2} \Bigg)  \notag \\
& \quad + \frac{2\Vert \grad{w}J_p(w^o)\Vert^2}{\alpha(1-\Vert H_{\epsilon}\Vert) }+ \sigma_{s,p}^2 + \frac{O(\mu)\xi^2 + O(\mu^2)\sigma_{q,p}^2}{(1-\alpha)(1-\Vert H_{\epsilon}\Vert)} . 
\end{align}
The noise term can be witten in a more compact way,  $ \qquad\Vert Q_{\epsilon} \otimes I \Vert^2 \sum\limits_{p=1}^P \sigma_{g,p}^2.$ Thus, putting everything together, we get:
\begin{align}
	& \mathbb{E}\Vert(Q_{\epsilon} \otimes I) \ws_i\Vert^2 \notag \\
	&\leq \Vert H_{\epsilon}\Vert \mathbb{E} \Vert (Q_{\epsilon} \otimes I) \ws_{i-1}\Vert^2 + \mu^2 \Vert Q_{\epsilon} \otimes I \Vert^2 \Vert H_{\epsilon}\Vert^2  \notag \\
	&\quad \times \sum_{p=1}^P \Bigg( \left( \frac{2\delta^2}{\alpha(1-\Vert H_{\epsilon}\Vert) } + \beta_{s,p}^2 + \frac{O(\mu)}{(1-\alpha)(1-\Vert H_{\epsilon}\Vert)}\right)
	 \notag \\
	&\quad \times \Bigg( \lambda_p^i A^i[p] \col{\mathbb{E}\Vert \we_{p,0}\Vert^2}+  \sum_{j=0}^{i-1}\lambda_p^j A^j[p] 
	\notag \\
	& \quad \times \col{\mu^2 \sigma_{s,p}^2 + O(\mu^2)\xi^2 + O(\mu^3)\sigma_{q,p}^2 + \sigma_{g,p}^2 } \Bigg)
	 \notag \\
	& \quad+ \frac{2\Vert \grad{w}J_p(w^o)\Vert^2}{\alpha(1-\Vert H_{\epsilon}\Vert )} + \sigma_{s,p}^2 + \frac{O(\mu)\xi^2 + O(\mu^2)\sigma_{q,p}^2}{(1-\alpha)(1-\Vert H_{\epsilon}\Vert )}\Bigg)
	 \notag \\
	 &\quad +  \Vert Q_{\epsilon} \otimes I \Vert^2 \sum_{p=1}^P \sigma_{g,p}^2 \notag 
	\\	
	& \leq \Vert H_{\epsilon}\Vert^i \mathbb{E} \Vert (Q_{\epsilon} \otimes I) \ws_{0}\Vert^2 + \sum_{j'=0}^{i-1} \Vert H_{\epsilon}\Vert^{j'+2} \Vert Q_{\epsilon} \otimes I \Vert^2 \Bigg \{    \mu^2   \notag \\
	&\quad \times \sum_{p=1}^P \Bigg( \left( \frac{2\delta^2}{\alpha(1-\Vert H_{\epsilon}\Vert) } + \beta_{s,p}^2 + \frac{O(\mu)}{(1-\alpha)(1-\Vert H_{\epsilon}\Vert)}\right) \notag 
	\\
	&\quad \times \Bigg( \lambda_p^{j'} A^{j'}[p] \col{\mathbb{E}\Vert \we_{p,0}\Vert^2} + \sum_{j=0}^{j'-1}\lambda_p^j A^j[p]  \notag \\
	& \quad \times  \col{\mu^2 \sigma_{s,p}^2 +O(\mu^2)\xi^2 + O(\mu^3)\sigma_{q,p}^2 + \sigma_{g,p}^2} \Bigg) \notag \\
	& \quad+ \frac{2\Vert \grad{w}J_p(w^o)\Vert^2}{\alpha(1-\Vert H_{\epsilon}\Vert )} + \sigma_{s,p}^2 + \frac{O(\mu)\xi^2 + O(\mu^2)\sigma_{q,p}^2}{(1-\alpha)(1-\Vert H_{\epsilon}\Vert )}\Bigg) \notag
\\
&	\quad  +  \frac{1}{\Vert H_{\epsilon}\Vert^2 } \sum_{p=1}^P \sigma_{g,p}^2 \Bigg\}.
\end{align}
Going back to the network disagreement, it is bounded by the above bound multiplied by $\Vert Q_{\epsilon}^\tran \otimes I\Vert^2/P$. If we were to drive $i$ to infinity, since $\Vert H_{\epsilon}\Vert = \iota_2< 1$, with $\iota_2$ being the second eigenvalue of $A$, and choosing $\alpha = \iota_2$ we would have:
\begin{align}
	&\limsup_{i\to\infty}\frac{1}{P}\sum_{p=1}^P \mathbb{E}\Vert \w_{p,i} - \w_{c,i}\Vert^2 \notag 
\end{align}
\begin{align}
	 &\leq \frac{\Vert Q_{\epsilon} \otimes I \Vert^4 \iota_2^2 }{P} \Bigg\{ \mu^2 \sum_{p=1}^P \Bigg( \left( \frac{2\delta^2}{\iota_2(1- \iota_2)} + \beta_{s,p}^2   \right. 
	 \notag \\
	 &\quad \left. + \frac{O(\mu)}{(1-\iota_2)^2}\right)  \sum_{j'=0}^{\infty}\iota_2^{j'}\sum_{j=0}^{j'-1} \lambda_p^jA^j[p]  
	 \notag \\
	& \quad \times \col{\mu^2 \sigma_{s,p}^2 + O(\mu^2)\xi^2 + O(\mu^3)\sigma_{q,p}^2+ \sigma_{g,p}^2}   
	\notag \\
	& \quad+ \frac{2\Vert \grad{w}J_p(w^o)\Vert^2}{\iota_2(1-\iota_2)^2 }  + \frac{\sigma_{s,p}^2 }{1-\iota_2}+ \frac{O(\mu)\xi^2 + O(\mu^2)\sigma_{q,p}^2}{(1-\iota_2)^3}\Bigg) \notag \\
	&\quad + \frac{1}{(1-\iota_2)\iota_2^2}\sum_{p=1}^P \sigma_{g,p}^2 \Bigg\} 
	\notag \\
	&\leq \frac{\iota_2^2}{P} \Bigg\{ \mu^2 \sum_{p=1}^P \Bigg(  \left( \frac{2\delta^2}{\iota_2(1- \iota_2)} + \beta_{s,p}^2   + \frac{O(\mu)}{(1-\iota_2)^2}\right) \notag \\
	& \quad \times  \sum_{m\in\mathcal{N}_p} \frac{\iota_2(\mu^2 \sigma_{s,m}^2  + O(\mu^2)\xi^2 + O(\mu^3)\sigma_{q,m}^2+ \sigma_{g,m}^2)  }{1-\iota_2\lambda_p a_{pm}}  
	\notag \\
	& \quad + \frac{2\Vert \grad{w}J_p(w^o)\Vert^2}{\iota_2 (1-\iota_2)^2 }  + \frac{\sigma_{s,p}^2 }{1-\iota_2}+ \frac{O(\mu)\xi^2 + O(\mu^2)\sigma_{q,p}^2}{(1-\iota_2)^3}\Bigg) \notag \\
	&\quad + \frac{1}{(1-\iota_2)\iota_2^2}\sum_{p=1}^P \sigma_{g,p}^2 \Bigg\}
	 \notag \\
	& = 
	\frac{\iota_2^2}{P(1-\iota_2)} \sum_{p=1}^P \mu^2 \sigma_{s,p}^2 + \frac{1}{\iota_2^2}\sigma_{g,p}^2  + O(\mu)\sigma_{g,p}^2 + O(\mu^3).
 \end{align}

\section{Proof of Theorem \ref{thrm:centMSE}}\label{app:thrmCentMSE}

	First taking the conditional mean of the $\ell_2-$norm of the centroid error given the past models, splits the mean into three independent terms: the centralized recursion, the gradient noise and the added noise. Then, taking the expectation again, we get:
	\begin{align}
	&	\mathbb{E} \Vert \we_{c,i}\Vert^2 \notag \\
		 &= \mathbb{E} \bigg\Vert \we_{c,i-1} + \mu \frac{1}{P}\sum_{p=1}^P \grad{w}J_p(\w_{p,i-1}) + \mu \bm{q}_i \bigg\Vert^2 + \mu^2 \mathbb{E} \Vert \bm{s}_i\Vert^2\notag \\
		&\quad + \mathbb{E } \Vert \bm{g}_{c,i}\Vert^2 \notag \\
		& \stackrel{(a)}{\leq} \frac{1}{\alpha^2}\mathbb{E} \bigg\Vert \we_{c,i-1} + \mu \frac{1}{P}\sum_{p=1}^P \grad{w}J_p(\w_{c,i-1}) \bigg\Vert^2 + \frac{\mu^2}{1-\alpha} \mathbb{E}\Vert \bm{q}_i \Vert^2 \notag \\
		&\quad+ \frac{\delta^2\mu^2 }{\alpha(1-\alpha)P}\sum_{p=1}^P \mathbb{E} \Vert \w_{p,i-1} - \w_{c,i-1}\Vert^2  + \mu^2\mathbb{E} \Vert \bm{s}_i\Vert^2+ \mathbb{E} \Vert \bm{g}_{c,i}\Vert^2 \notag \\
		& \stackrel{(b)}{\leq} \left( \frac{1}{\alpha^2}(1-2\nu\mu +  \delta^2 \mu^2)+ \beta_s^2\mu^2 + \frac{O(\mu^3)}{1-\alpha} \right)\mathbb{E}\Vert \we_{c,i-1}\Vert^2 \notag \\
		&\quad +\mu^2 \sigma_s^2 + \mathbb{E}\Vert \bm{g}_{c,i}\Vert^2 + \left(\frac{\delta^2}{\alpha(1-\alpha)} +\frac{O(\mu^3)}{1-\alpha}+ \beta_{s,max}^2 \right)\frac{\mu^2}{P} \notag \\
		&\quad \times \sum_{p=1}^P \mathbb{E}\Vert \w_{p,i-1} - \w_{c,i-1}\Vert^2 + \frac{O(\mu^3)\xi^2 + O(\mu^4)\sigma_{q}^2}{1-\alpha}, 
	\end{align} 
where inequality $(a)$ follows from Jensen with constant $\alpha \in (0,1)$ and Lipshcitz, and (b) from applying Lemma \ref{lem:gradNoise}. Then, choosing $\alpha = \sqrt[4]{1-2\nu\mu + \delta^2 \mu^2} = 1 - O(\mu)$, the bound becomes:
\begin{align}
	\mathbb{E} \Vert \we_{c,i}\Vert^2 \leq &\lambda_c \mathbb{E}\Vert \we_{c,i-1}\Vert^2 + \mu^2 \sigma_s^2 + \mathbb{E} \Vert \bm{g}_{c,i}\Vert^2  + O(\mu^{2})\xi^2 \notag \\
	& + O(\mu^{3})\sigma_{q}^2 + \frac{O(\mu)}{P} \sum_{p=1}^P\mathbb{E}\Vert \w_{p,i-1}-\w_{c,i-1}\Vert^2.
\end{align}

Finally, using the result on the network disagreement, recusrively bounding the error, and taking the limit of $i$, we get the final result:
\begin{align}
	\limsup_{i \to  \infty} \mathbb{E} \Vert \we_{c,i}\Vert ^2 \leq & \frac{\mu^2 \sigma_s^2 + \mathbb{E}\Vert \bm{g}_{c}\Vert^2 + O(\mu^{2})\xi^2 + O(\mu^{3})\sigma_q^2}{1-\lambda_c} \notag\\
	&+ \sum_{p=1}^PO(1 ) \sigma_{g,p}^2 + O(\mu).
\end{align}

\section{Secondary Result: Bound on $\Vert \we_{p,i}\Vert$ }\label{app:thrmBdL2Err}
In the theorem below we bound the $\ell_2-$norm of the error, which will be used to prove the boundedness of the gradients at every step of the algorithm.

\begin{theorem}[\textbf{Bounded $\ell_2-$norm of individual errors}]\label{thrm:bdL2Err}
	Under Assumptions \ref{ass:adj}, \ref{ass:fct} and \ref{ass:mod}, the $\ell_2-$norm of the individual errors is bounded as follows for a sufficiently small step-size:
	\begin{align}
		\col{\Vert \we_{p,i}\Vert}_{p=1}^P \preceq \mathds{1}^\tran \mathds{1}\col{\Vert \we_{p,0}\Vert }_{p=1}^P + O(1)\mathds{1}. 
	\end{align}
\end{theorem}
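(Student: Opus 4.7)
The plan is to derive a first-order moment analogue of Theorem \ref{thrm:indMSE} by repeating its per-server contraction argument but with the non-squared $\ell_2$-norm throughout. Starting from the same recursion used in Appendix \ref{app:thrmIndMSE},
\[
\we_{p,i} = \sum_{m\in\mathcal{N}_p} a_{pm}\bigl(\we_{m,i-1} + \mu\grad{w}J_m(\w_{m,i-1}) + \mu\bm{s}_{m,i} + \mu\bm{q}_{m,i}\bigr) - \text{(additive noise)},
\]
I would apply the triangle inequality, take expectations, and bound each term individually. The drift term admits the standard contraction $\Vert \we_m + \mu\grad{w}J_m(\w_m)\Vert \le \sqrt{1-2\nu\mu+\delta^2\mu^2}\,\Vert\we_m\Vert + \mu\delta\xi$, obtained by adding and subtracting $\grad{w}J_m(w^o)$, invoking $\nu$-strong convexity and $\delta$-smoothness, and bounding the residual gradient via $\Vert\grad{w}J_m(w^o)\Vert \le \delta\xi$ from Assumption \ref{ass:mod}. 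For the stochastic terms I would invoke Jensen's inequality in the form $\mathbb{E}\Vert\bm{s}_{m,i}\Vert \le \sqrt{\mathbb{E}\Vert\bm{s}_{m,i}\Vert^2}$ together with \eqref{eq:bdSerGradNoise} and \eqref{eq:lemQBd}, and $\mathbb{E}\Vert\bm{g}\Vert \le \sigma_g$ for the privatization noise.

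Stacking the resulting per-server scalar inequalities into a vector bound yields a linear recursion
\[
\col{\mathbb{E}\Vert\we_{p,i}\Vert}_{p=1}^P \preceq \Lambda' A \col{\mathbb{E}\Vert\we_{p,i-1}\Vert}_{p=1}^P + c\,\mathds{1},
\]
with $\Lambda'$ diagonal, entries of the form $\sqrt{1-2\nu\mu+\delta^2\mu^2} + O(\mu) < 1$ for sufficiently small $\mu$, and $c = O(1)$ independent of $i$, collecting the contributions of $\mu\delta\xi$, $\sigma_{s,m}$, $\mu\sigma_{q,m}$ and $\sigma_{g,m}$. Iterating $i$ times and using that $A$ is doubly stochastic while $\Lambda'$ is a strict contraction, so that $\Lambda' A$ has spectral radius below one, the noise contributions form a convergent geometric series summing to $(I-\Lambda' A)^{-1}c\mathds{1} = O(1)\mathds{1}$, while the initial-condition term is bounded entrywise using the loose estimate $\Vert(\Lambda' A)^i\Vert_\infty \le P = \mathds{1}^\tran\mathds{1}$. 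This delivers the stated form.

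The main obstacle is ensuring the Jensen step closes the recursion cleanly in $\mathbb{E}\Vert\we\Vert$ rather than in $\sqrt{\mathbb{E}\Vert\we\Vert^2}$, since a square root inside a linear recursion prevents straightforward iteration. A shorter fallback that sidesteps this subtlety is to appeal to Theorem \ref{thrm:indMSE} directly: Jensen gives $\mathbb{E}\Vert\we_{p,i}\Vert\le\sqrt{\mathbb{E}\Vert\we_{p,i}\Vert^2}$, after which applying $\sqrt{a+b}\le\sqrt{a}+\sqrt{b}$ termwise to the second-order bound and using $\sqrt{\lambda_p^j}=\lambda_p^{j/2}<1$ yields the desired $\mathds{1}^\tran\mathds{1}\col{\Vert\we_{p,0}\Vert}_{p=1}^P + O(1)\mathds{1}$ without a fresh per-server derivation. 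Because the authors explicitly acknowledge that a tighter bound is available but unnecessary, this shortcut route is likely the intended path.
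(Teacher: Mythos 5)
Your proposal works in expectation throughout, but the statement you are asked to prove --- and the paper's own proof of it --- is a deterministic, pathwise bound: there is no expectation in $\col{\Vert \we_{p,i}\Vert}_{p=1}^P \preceq \mathds{1}^\tran\mathds{1}\col{\Vert\we_{p,0}\Vert}_{p=1}^P + O(1)\mathds{1}$, and this is not incidental. The theorem exists to feed Lemma \ref{lem:bdGrad} and hence the sensitivity bound $\Delta(i)\le 2\mu B i$ in the differential-privacy analysis, where a worst-case bound over realizations is required; a bound on $\mathbb{E}\Vert\we_{p,i}\Vert$ (or on $\sqrt{\mathbb{E}\Vert\we_{p,i}\Vert^2}$ via your fallback) does not control $\Vert\grad{w}Q_{p,k}(\w_{p,i};\bm{x}_{p,k,b})\Vert$ on every trajectory and therefore cannot certify $\epsilon$-differential privacy. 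The paper's proof avoids expectations entirely: it bounds $\Vert\bm{s}_{p,i}\Vert^2$ and $\Vert\bm{q}_{p,i}\Vert^2$ pathwise by replacing the sampled data with the worst sample $x_{p,k,\text{max}}$ (the one maximizing the gradient norm at the optimum), runs an auxiliary deterministic recursion for the inner-epoch errors $\Vert\we_{p,k,e}\Vert^2$, then stacks the per-server \emph{squared} errors into a vector recursion and takes square roots only at the very end. This worst-case surrogate step is the key idea your proposal is missing.

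Both of your concrete routes also have quantitative gaps. In the first-order recursion, the stochastic-gradient term contributes at order $\mu\beta_{s,m}$ to the contraction coefficient, so the factor is $1-\nu\mu+\beta_{s,m}\mu+O(\mu^2)$; unlike the second-moment recursion, where the same term enters as $\beta_{s,m}^2\mu^2$ and is dominated by the $-2\nu\mu$ drift, contraction here would require $\beta_{s,m}<\nu$, which is not assumed --- this compounds the Jensen-closure obstacle you already flagged but did not resolve. In the fallback via Theorem \ref{thrm:indMSE}, the geometric series gives $(I-\Lambda)^{-1}=O(\mu^{-1})$ acting on a driving term that contains $\sigma_{g,p}^2$ with no factor of $\mu$, so the additive constant is $O(\mu^{-1})\sigma_{g,p}^2$, i.e.\ $O(\mu^{-1/2})\sigma_{g,p}$ after the square root, not $O(1)$; the paper sidesteps this because its Appendix \ref{app:thrmBdL2Err} recursion carries only $O(\mu)$ driving terms (the privatization noise does not appear there), so the series sums to $O(1)$. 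To repair your argument you would need to (i) drop to the pathwise setting with worst-case bounds on $\bm{s}_{p,i}$ and $\bm{q}_{p,i}$, (ii) keep the recursion in squared norms so the contraction survives for small $\mu$, and (iii) handle the additive noise terms as the paper does rather than letting them accumulate through the $O(\mu^{-1})$ series.
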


\begin{proof}
	We can bound the squared $\ell_2-$norm of the gradient noise by using Jensen's inequality and the Lipschitz condition:
	\begin{align}
		\Vert \bm{s}_{p,i}\Vert^2 &\leq 6\delta^2 \Vert \we_{p,i-1}\Vert^2 + 6\Vert \grad{w} J_{p}(w^o)\Vert^2 
		\notag \\
		&\quad + \frac{6}{L}\sum_{k=1}^K \Vert \grad{w}Q_{p,k}(w^o;x_{p,k,\text{max}})\Vert^2,
	\end{align}
where $x_{p,k,\text{max}}$ is the data sample that has the largest gradient. Similarly, the individual gradient noise defined in equation (26) of \cite{rizk2020federated} can be bounded by:
\begin{align}
	\Vert \bm{q}_{p,k,i,e}\Vert^2 &\leq 6\delta^2 \Vert \we_{p,k,e-1}\Vert^2 + 3\Vert \grad{w}Q_{p,k}(w^o_{p,k};x_{p,k,\text{max}})\Vert^2,
\end{align}
where the error is now defined with respect to the local optimal models $w_{p,k}^o$. This bound implies the following result on the local model errors:
\begin{align}
	\Vert \we_{p,k,e}\Vert^2 &\stackrel{(a)}{\leq} \frac{1}{\alpha} (1-2\nu\mu + \delta^2\mu^2)\Vert  \we_{p,k,e-1}\Vert^2 + \frac{\mu^2}{1-\alpha} \Vert \bm{q}_{p,k,i,e}\Vert^2 
	\notag \\
	&\leq \lambda_{p,k}^e \Vert \we_{p,k,0}\Vert^2 
	\notag \\
	&\quad + \frac{3\mu^2}{1-\alpha} \frac{1-\lambda_{p,k}^e}{1-\lambda_{p,k}} \Vert \grad{w}Q_{p,k}(w^o_{p,k};x_{p,k,\text{max}})\Vert^2
	\notag \\
	&= \lambda_{p,k}^e \Vert \we_{p,i-1} + w_{p,k}^o - w^o\Vert^2
		\notag \\
	&\quad + \frac{3\mu^2}{1-\alpha} \frac{1-\lambda_{p,k}^e}{1-\lambda_{p,k}} \Vert \grad{w}Q_{p,k}(w^o_{p,k};x_{p,k,\text{max}})\Vert^2 
	\notag \\
	&\stackrel{(b)}{\leq} 2\lambda_{p,k}^e \Vert \we_{p,i-1}\Vert^2 + 2\lambda_{p,k}^e \xi^2 
		\notag 
\end{align} \begin{align}
	&\quad + \frac{3\mu^2}{1-\alpha} \frac{1-\lambda_{p,k}^e}{1-\lambda_{p,k}} \Vert \grad{w}Q_{p,k}(w^o_{p,k};x_{p,k,\text{max}})\Vert^2 ,
\end{align}
where (a) and (b) follow from Jensen's inequality with $\alpha = \sqrt{1-2\nu\mu + \delta^2 \mu^2}$
and $\lambda_{p,k} \eqdef \alpha + 6\delta^2\mu^2/(1-\alpha)$. Then using the above two inequalities, we can bound the incremental noise by adding and subtracting the gradient evaluated at the local optimal models and the global optimal model, then applying Jensen's and the Lipschitz condition:
\begin{align}
	\Vert \bm{q}_{p,i}\Vert^2 & \leq \frac{3\delta^2}{L}\sum_{k\in \Li{p}} \frac{1}{E_{p,k}}\sum_{e=1}^{E_{p,k}} \Vert \we_{p,k,e-1}\Vert^2 
	\notag \\
	&\quad + 3\delta^2 \xi^2 + 3\delta^2 \Vert \we_{p,i-1}\Vert^2
	\notag \\
	&\leq \frac{3\delta^2}{L} \sum_{k\in \Li{p}} \frac{1}{E_{p,k}} \sum_{e=1}^{E_{p,k}} 2\lambda_{p,k}^{e-1} \Vert \we_{p,i-1}\Vert^2 + 2\lambda_{p,k}^{e-1} \delta^2 \xi^2 
	\notag \\
	&\quad + \frac{3\mu^2}{1-\alpha} \frac{1-\lambda_{p,k}^{e-1}}{1-\lambda_{p,k}} \Vert \grad{w}Q_{p,k}(w_{p,k}^o;x_{p,k,\text{max}})\Vert^2 
	\notag \\
	&\quad + 3\delta^2 \xi^2 + 3\delta^2\Vert \we_{p,i-1}\Vert^2
	\notag \\
	&\leq 9\delta^2 \Vert \we_{p,i-1}\Vert^2 + 9\delta^2 \xi^2 
	\notag \\
	& \quad+ \frac{9\delta^2}{L}\frac{\mu^2}{1-\alpha}\sum_{k=1}^K \frac{ \Vert \grad{w}Q_{p,k}(w_{p,k}^o;x_{p,k,\text{max}})\Vert^2}{1-\lambda_{p,k}} .
\end{align}
Using Jensen's inequality again with the same $\alpha$ and where $\lambda \eqdef \alpha + 30\delta^2\mu^2/(1-\alpha)$, we have:
\begin{align}
	\Vert \we_{p,i}\Vert^2 &\leq  \sum_{m\in \mathcal{N}_p}a_{mp} \Bigg\{ \frac{1}{\alpha} \Vert \we_{m,i-1} + \mu \grad{w}J_m(\w_{m,i-1})\Vert^2 
	\notag \\
	&\quad + \frac{2\mu^2}{1-\alpha}\Vert \bm{s}_{m,i}\Vert^2 
	+\frac{2\mu^2}{1-\alpha}\Vert \bm{q}_{m,i}\Vert^2
	\Bigg\}
	\notag \\
	&\leq \sum_{m\in \mathcal{N}_p} a_{mp}\Bigg\{ \lambda\Vert \we_{m,i-1}\Vert^2 + \frac{18\delta^2\xi^2\mu^2}{1-\alpha}
	\notag \\
	&\quad + \frac{12\mu^2}{L(1-\alpha)}\sum_{k=1}^K \Vert \grad{w}Q_{p,k}(w^o;x_{p,k,\text{max}})\Vert^2 
	\notag \\
	&\quad + \frac{18\delta^2 \mu^4}{L(1-\alpha)^2}\sum_{k=1}^K  \frac{ \Vert \grad{w}Q_{p,k}(w_{p,k}^o;x_{p,k,\text{max}})\Vert^2}{1-\lambda_{p,k}} 
	\Bigg\}.
\end{align}

The bound on the column vector of the errors becomes:
\begin{align}
	\col{\Vert \we_{p,i}\Vert^2}_{p=1}^P &\preceq \lambda \mathcal{A} \col{\Vert \we_{p,i-1}\Vert^2}_{p=1}^P +O(\mu)\mathds{1}
	\notag \\
	&\preceq \mathds{1}^\tran \mathds{1}\col{\Vert \we_{p,0}\Vert^2}_{p=1}^P + O(1)\mathds{1},
\end{align}
and finally:
\begin{align}
	\Vert \we_{p,i}\Vert \leq \sum_{m=1}^P \Vert \we_{m,0}\Vert + O(1).
\end{align}
\end{proof}

\vspace*{-1cm}
\section{Proof of Lemma \ref{lem:bdGrad}}\label{app:lemBdGrad}
	Using Theorem \ref{thrm:bdL2Err} in Appendix \ref{app:thrmBdL2Err}, we can bound the local gradients during each iteration as follows:
		\begin{align}
		&\Vert\grad{w}Q_{p,k}(\w_{p,i};\bm{x}_{p,k,b}) \Vert 
		\notag \\
		&= \Vert \grad{w}Q_{p,k}(\w_{p,i};\bm{x}_{p,k,b})) - \grad{w}Q_{p,k}(w^o;\bm{x}_{p,k,b})) 
		\notag \\
		&\quad + \grad{w}Q_{p,k}(w^o;\bm{x}_{p,k,b})) \Vert 
		\notag 
\end{align} \begin{align}
		&\leq \delta \Vert \we_{p,i}\Vert + \Vert \grad{w}Q_{p,k}(w^o;x_{p,k,\text{max}}) \Vert, 
	\end{align}
	where both terms on the right hand side are bounded.

\section{Proof of Theorem \ref{thrm:priv}}\label{app:thrmPriv}

	To evaluate the probability distribution in definition \ref{def:DP}, we note that the randomness of the models $\bm{\psi}_{p,j}$ arises from the subsampling of the data for the calculation of the stochastic gradient at each iteration. Thus, given the subsampled dataset, the models are now deterministic and since the added noises $\bm{g}_{pm,j}$ are Laplacian random variables, the distribution of the added noise over the neighbourhood of agent $p$ and over the iterations is given by:
	\begin{align}
		&\mathbb{P}\left( \left\{   \left\{ \bm{\psi}_{p,j} + \bm{g}_{pm,j}   \right\}_{m\in \mathcal{N}_p \setminus \{p\} } \right\} _{j=0}^i \bigg | \mathcal{F}_{i} \right) \notag \\
		&= \prod_{j=0}^i \frac{1}{\sqrt{2}\sigma_g} \exp \bigg( -\frac{\sqrt{2}}{\sigma_g} | \bm{\psi}_{p,j} + \bm{g}_{p,j} |\bigg) \notag \\
		&= \frac{1}{\sqrt{2}\sigma_g} \exp \bigg( -\frac{\sqrt{2}}{\sigma_g}  \sum_{j=0}^i | \bm{\psi}_{p,j} + \bm{g}_{p,j}|  \bigg),
	\end{align}
	and the ratio in Definition \ref{def:DP} is equal to:
	\begin{align}
		&\exp \bigg(-\frac{\sqrt{2}}{\sigma_g} \sum_{j=0}^i |\bm{\psi}_{p,j} + \bm{g}_{p,j}| -  |\bm{\psi}'_{p,j} + \bm{g}_{p,j}| \bigg) \notag \\
		&\leq \exp \bigg(  \frac{\sqrt{2}}{\sigma_g} \sum_{j=0}^i |\bm{\psi}_{p,j} - \bm{\psi}'_{p,j}  | \bigg) \notag \\
		&\leq \exp \bigg(  \frac{\sqrt{2}}{\sigma_g} \sum_{j=0}^i2\mu B j \bigg) \notag \\
		&= \exp \bigg(\frac{\sqrt{2}}{\sigma_g } \mu B (i+1)i \bigg),
	\end{align} 
where the inequalities follow from the triangle inequality and the bound on the sensitivity of the algorithm.

\end{document}